\documentclass{article}

\PassOptionsToPackage{numbers, compress}{natbib}
 \usepackage[preprint]{neurips_2026}

\usepackage[utf8]{inputenc} 
\usepackage[T1]{fontenc}    
\usepackage{hyperref}       
\usepackage{url}            
\usepackage{booktabs}       
\usepackage{amsfonts}       
\usepackage{nicefrac}       
\usepackage{microtype}      
\usepackage{xcolor}         

\usepackage{multirow}
\usepackage{graphicx}
\usepackage{listings}
\usepackage[most]{tcolorbox}
\usepackage{amsthm}
\usepackage{arydshln}
\usepackage[table]{xcolor}

\theoremstyle{definition}

\theoremstyle{plain}
\newtheorem{theorem}{Theorem}
\newtheorem{proposition}{Proposition}

\theoremstyle{remark}
\newtheorem{remark}{Remark}

\tcbuselibrary{listings,breakable}

\definecolor{promptgray}{RGB}{248,248,248}

\newtcblisting{promptbox}[1][]{
    listing only,
    breakable,
    enhanced,
    colback=promptgray,
    colframe=black!15,
    boxrule=0.4pt,
    arc=2pt,
    left=6pt,
    right=6pt,
    top=6pt,
    bottom=6pt,
    listing options={
        basicstyle=\ttfamily\small,
        breaklines=true,
        columns=fullflexible,
        keepspaces=true,
        showstringspaces=false,
        tabsize=2
    },
    #1
}

\title{Can MLLMs Reason About Visual Persuasion? Evaluating the Efficacy and Faithfulness of Reasoning}

\author{%
  Naeun Lee \\
  Seoul National University \\
  \texttt{naeun.lee@snu.ac.kr} \\
  \And
  Hyunjong Kim \\
  Seoul National University \\
  \texttt{hjkim0811@snu.ac.kr} \\
  \And
  Sunghwan Choi\\
  Seoul National University \\
  \texttt{b1lly13@snu.ac.kr} \\
  \And
  Injin Kong \\
  Seoul National University \\
  \texttt{mtkong77@snu.ac.kr} \\
  \And
  Yohan Jo$^\dagger$ \\
  Seoul National University \\
  \texttt{yohan.jo@snu.ac.kr} \\
}

\begin{document}

\maketitle

\begin{abstract}

Despite strong performance of Multimodal Large Language Models (MLLMs) on multimodal tasks, predicting whether and why an image is persuasive remains challenging. We first show that prompting MLLMs to reason before prediction does not consistently help, and can even reduce persuasiveness prediction performance, suggesting that naively generated rationales are unreliable signals for this task. Yet, no established methodology exists for training MLLMs to reason about visual persuasion or evaluating whether their rationales faithfully support their decisions.
To address this gap, we show empirically and theoretically that diverse teacher-generated rationales, when used for supervised fine-tuning, improve visual persuasiveness prediction. We further introduce a three-dimensional faithfulness evaluation framework covering \emph{rationale-to-decision consistency}, \emph{rationale-to-image groundedness}, and \emph{rationale-to-decision sensitivity}. Applying this framework shows that prediction performance alone does not guarantee faithful rationales, while rationale-to-decision sensitivity is most aligned with human rationale preferences. These findings motivate faithfulness-aware training objectives and scalable rationale supervision for visual persuasiveness evaluation.\footnote{Code is available at \url{https://github.com/holi-lab/Visual_Persuasion}.}
\end{abstract}

\def\thefootnote{\fnsymbol{footnote}}
\footnotetext[2]{Corresponding author.}
\def\thefootnote{\arabic{footnote}}

\section{Introduction}
\label{sec:intro}

Visual persuasion, the use of images to influence cognition, beliefs, and behavioral intentions, plays a central role in modern communication \citep{chandler2011dictionary}.
Across public health campaigns, political messaging, and digital advertising, persuasive images serve as a medium for delivering messages that change attitudes and behaviors.
Advances in generative AI further increase the importance of this problem, as text-to-image models enable automated generation of tailored persuasive content at lower cost~\citep{pvp, cap}.
As such content is generated and deployed at increasing scale, evaluating whether and why an image is persuasive becomes a problem of growing practical and societal significance.

Figure~\ref{fig:pvp_examples} shows image examples, intended messages, and persuasiveness labels.
Despite the strong performance of MLLMs on many multimodal tasks, visual persuasiveness evaluation remains challenging because it involves assessing how visual evidence supports or weakens the intended message of the image.
Unlike factual image understanding, this requires interpretive reasoning over diverse factors, such as affective, symbolic, and compositional cues~\citep{blair2011possibility, kjeldsen2015study, mcquarrie1999visual, phillips2004beyond}.
Furthermore, there is often no single ground-truth path: the persuasiveness of an image can vary based on a viewer's personality traits and values~\citep{pvp}, allowing for multiple valid rationales for the same image and message.

\begin{figure}
    \centering
    \includegraphics[width=0.85\linewidth]{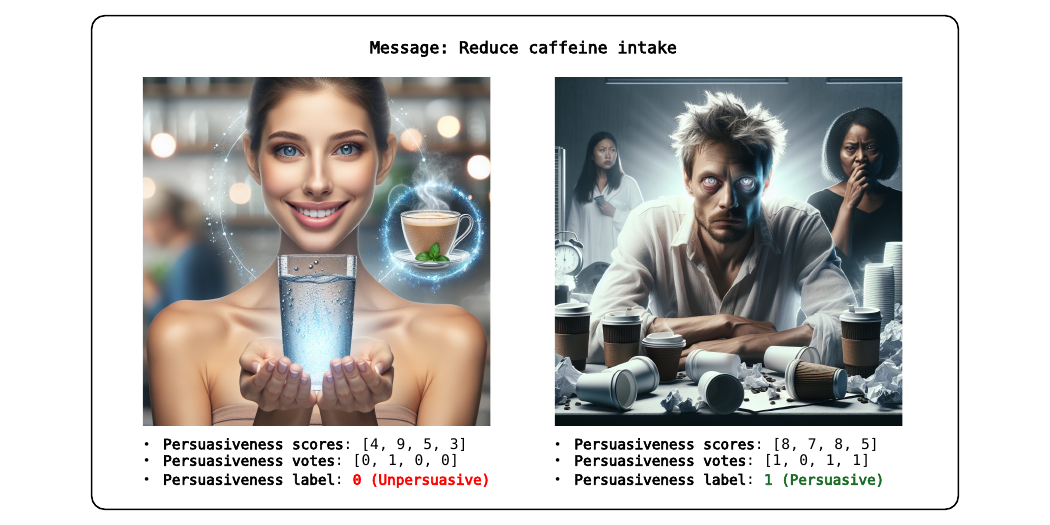}
    \caption{Example from the PVP dataset.}
    \label{fig:pvp_examples}
\end{figure}

Can current MLLMs reason effectively and faithfully about visual persuasion?
Our analysis suggests they cannot (Sections \ref{section:training} and \ref{sec:rationale-evaluation}). 
Instructing models to generate rationales regarding the persuasiveness of the input image and message before prediction does not reliably improve, and can even degrade prediction accuracy.
Further, their reasoning often exhibits a message-relevance shortcut: identifying visual elements related to the target message and treating the mere presence of such elements as sufficient evidence of persuasiveness.
Accordingly, these two findings highlight two critical problems: (1) improving the reasoning capabilities of MLLMs for visual persuasion, and (2) evaluating the faithfulness of generated rationales to both the input images and the model's final decisions.


To address these gaps, we first construct a rationale dataset from large vision-language teacher models. To mitigate teacher-model biases in reasoning patterns and improve rationale diversity, we use dual-axis prompts that vary along \emph{evidence polarity} (\emph{support-focused} vs.\ \emph{counter-aware}) and \emph{visual granularity} (\emph{global} vs.\ \emph{local}).
Fine-tuning on the resulting diverse-perspective rationales consistently improves persuasiveness prediction over label-only and unsupervised-rationale baselines. However, improved prediction does not necessarily imply faithful reasoning. We therefore introduce, to our knowledge, the first faithfulness evaluation framework for visual persuasion, assessing rationales along three axes: rationale-to-decision consistency, rationale-to-image groundedness, and rationale-to-decision sensitivity.
Applying this framework shows that stronger predictors do not necessarily produce more faithful rationales. We further relate these dimensions to human rationale preferences and find that rationale-to-decision sensitivity is most aligned with human preference rates at the pairwise level (Pearson \(r=0.771, p=0.016\); Spearman \(\rho=0.611\)). These results motivate faithfulness-aware training and scalable rationale supervision for visual persuasion.

Our contributions are threefold. 
First, we present a pioneering rationale supervision methodology for visual persuasion, structured around two dimensions---evidence polarity and visual granularity---that improve persuasiveness prediction.
Second, we introduce the first faithfulness evaluation framework for visual persuasion, covering rationale-to-decision consistency, rationale-to-image groundedness, and rationale-to-decision sensitivity.
Third, we show that prediction performance and rationale faithfulness are not consistently aligned, and relate these faithfulness dimensions to human rationale preferences to motivate future evaluation and training directions.

\section{Related Work}

\paragraph{Visual Persuasion and Visual Rhetoric.}
Computational work on visual persuasion has progressed from inferring communicative intent from images of political figures~\citep{joo2014visual}, to mining image persuasiveness in multimodal social media posts~\citep{liu2022imagearg}, and collecting large-scale persuasiveness ratings~\citep{pvp}.
These efforts treat persuasiveness as a prediction target without examining reasoning behind those judgments.
Theoretical work in visual rhetoric establishes, however, that persuasive images operate through affect, symbolism, and argument structure~\citep{blair2011possibility,kjeldsen2015study,mcquarrie1999visual,phillips2004beyond}, with meaning constructed across multiple levels of visual perception~\citep{navon1977forest,oliva2006building}.
Correct prediction alone cannot reveal whether a model's judgment is grounded in a coherent visual argument, which our work measures.

\paragraph{Rationale Supervision.}
A growing body of work has shown that fine-tuning language models on chain-of-thought rationales generated by larger teacher models, rather than output labels alone, improves task performance and generalization~\citep{magister2023teaching,ho2023large,shridhar2023distilling}.
This paradigm has been extended to MLLMs, where supervising intermediate reasoning over visual inputs further improves performance on complex multimodal tasks~\citep{zhang2023multimodal,shao2024visual}.
These methods, however, largely treat label correctness as sufficient supervision, leaving the quality of the reasoning process itself unmeasured.
In visual persuasion, where persuasive judgments admit multiple valid interpretations, correct label prediction alone does not guarantee that the model's rationale faithfully reflects the basis of its judgment.

\paragraph{Rationale Faithfulness Evaluation.}
In NLP systems, faithfulness refers to how accurately an explanation reflects a model's reasoning process~\citep{jacovi2020towards}.
Prior work has studied faithfulness along dimensions, including the sufficiency and comprehensiveness of extractive rationales~\citep{deyoung2020eraser}, prediction sensitivity to reasoning-step perturbations~\citep{lanham2023measuring}, and the causal influence of intermediate reasoning steps on final outcomes~\citep{paul2024making}.
These approaches, however, assume settings where a verifiable answer, extractive evidence, or well-defined answer space exists for checking reasoning.
No prior work has measured model-reasoning faithfulness for visual persuasion, where the subjective and symbolic nature of persuasive judgment admits no such anchor.
To the best of our knowledge, we introduce the first rationale evaluation protocol for this setting, assessing reasoning quality along rationale-to-decision consistency, rationale-to-image groundedness, and rationale-to-decision sensitivity.

\section{Dual-Axis Rationale Training for Visual Persuasiveness Prediction}
\label{section:training}

In our preliminary analysis, we verified whether MLLMs' reasoning improves the performance of their persuasiveness prediction.
Comparing the Base and Base-Reasoning rows in Table~\ref{tab:classification_results}, we find that base MLLMs achieve only modest performance under direct prediction, and that rationale-first prediction does not consistently improve over this baseline. To better understand this limitation, we analyze rationales generated by the base models and find a recurring shortcut: MLLMs often treat message-related visual elements as sufficient evidence of persuasiveness. For example, for the message ``Make coffee at home,'' a model may cite the presence of a coffee machine to justify a persuasive prediction, even when the ground-truth label is non-persuasive. Although the coffee machine is relevant to the message, its presence alone does not show that the image persuasively conveys the intended behavior.

This suggests that the limitation of rationale-first prediction is not merely a visual recognition failure, but an \emph{evidence-use failure}. MLLMs tend to rely on message-relevant cues while underweighting evidence against persuasiveness, such as distracting objects, ambiguous scene context, weak behavioral affordance, or a mismatch between the image and the intended action. As a result, their rationales often justify a positive prediction using selectively supportive cues, rather than weighing both supporting and opposing evidence for an image-level persuasive judgment.

This failure also motivates rationale supervision beyond the model's own self-generated trajectories. Since online trajectories are sampled from the model's current reasoning policy, they expose the model only to a limited set of rationale patterns. If the base model already equates message relevance with persuasiveness, optimizing over such trajectories may reinforce this shortcut rather than teach reasoning patterns outside its current capability.

\subsection{Dual-Axis Rationale Design}
\label{sec:prompt-design}

To broaden rationale supervision, we design dual-axis prompts that vary the type and level of visual evidence considered in the rationale. The first axis, \emph{evidence polarity}, controls whether the rationale focuses on decision-supporting evidence or also considers counterevidence simultaneously. The second axis, \emph{visual granularity}, controls whether the rationale evaluates the overall scene or specific visual elements. Figure~\ref{fig:prompt_design} summarizes the resulting prompt design.

\begin{figure}
    \centering
    \includegraphics[width=1.0\linewidth]{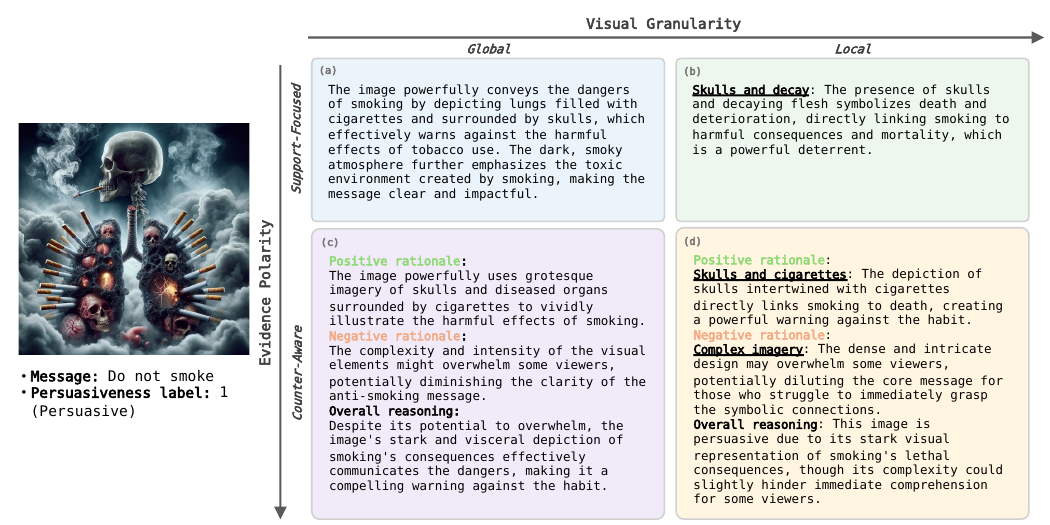}
    \caption{\textbf{Prompt design for rationale extraction.}
Given an image--message--persuasiveness triple, prompts vary along evidence polarity and visual granularity, yielding four rationale types: \emph{support-focused global}, \emph{support-focused local}, \emph{counter-aware global}, and \emph{counter-aware local}.}
    \label{fig:prompt_design}
\end{figure}

\paragraph{Evidence Polarity.}
Evidence polarity specifies whether a rationale uses only evidence supporting the persuasiveness decision or also considers evidence that may weaken it. This axis is motivated by cognitive psychology work showing that robust judgment requires considering not only belief-consistent evidence, but also conflicting cues and alternative interpretations~\citep{baron2005rationality, stanovich1997reasoning,nickerson1998confirmation, stanovich2013myside}. We therefore define two prompt types: \emph{support-focused}, which focuses on decision-consistent evidence, and \emph{counter-aware}, which considers both supporting and opposing cues.

\paragraph{Visual Granularity.}
Visual granularity specifies whether a rationale evaluates the image as a whole or focuses on specific visual evidence. This axis is motivated by work in visual cognition showing that perception operates across global and local levels, from scene structure to individual elements~\citep{navon1977forest, oliva2006building}. We therefore define two prompt types: \emph{global} and \emph{local}. Global prompts guide the model to evaluate the scene, composition, and communicative context, whereas local prompts guide it to examine specific objects, actions, text, and visual details relevant to the persuasiveness decision.

Combining the two axes yields four prompt types: \emph{support-focused global}, \emph{support-focused local}, \emph{counter-aware global}, and \emph{counter-aware local}. We design seven individual prompt templates across these types, provided in Appendix~\ref{app:prompt-reasoning-data}.

\subsection{Dataset Reconstruction}
\label{sec:dataset-reconstruction}

\paragraph{Binary Label Reconstruction.}
We build on the PVP dataset~\citep{pvp}, which provides 28{,}454 images paired with 
596 messages across 20 topics, each scored on a 0--10 persuasiveness scale by four 
annotators. We frame this as binary classification: whether an image is persuasive 
for a given message~\citep{Tan_2016, liu2022imagearg, liu2023imagearg2023}. 
Figure~\ref{fig:pvp_examples} shows examples with binary labels.
We convert each annotator's raw score into a binary vote based on its relative position in the
annotator's score distribution, addressing annotator-specific score scales (details are provided in 
Appendix~\ref{app:dataset-reconstruction}). Image--message pairs are labeled `persuasive' (1) if they receive at least 
75\% persuasive votes from annotators, labeled `unpersuasive' (0) if they receive at least 75\% 
unpersuasive votes, and discarded otherwise. We split the retained pairs by 
message while maintaining label balance, yielding a training split of 820 
image--message pairs (309 messages) and a test split of 209 pairs (79 messages).

\paragraph{Rationale Extraction.}
For each image--message--label triple in the training split, we apply all seven prompt 
templates from Appendix~\ref{app:prompt-reasoning-data}, obtaining rationales that 
differ in both evidence polarity and visual granularity. We construct two 
teacher-generated rationale sets: \emph{Qwen}, generated by 
Qwen2.5-VL-72B-Instruct~\citep{qwen2.5-VL}, and \emph{Phi}, generated by 
Phi-4-reasoning-vision-15B~\citep{phi4vr14b2026}. After filtering rationales that do 
not follow the required output format, the final dataset 
contains 5{,}670 and 5{,}738 training instances for the Qwen and Phi sets, 
respectively.

\subsection{Empirical Validation of Dual-Axis Rationale Supervision}
\label{sec:empirical-validation}

\paragraph{Experimental Setup.}
We fine-tune two student models, Qwen2.5-VL-7B-Instruct and 
Phi-3.5-vision-instruct, on the \emph{Qwen} and \emph{Phi} rationale sets, respectively, pairing each with a same-family teacher to avoid feature-space misalignment~\citep{hao2023oneforall, 
boizard2024towards}. We compare our \emph{Reasoning-SFT}—which fine-tunes 
via supervised next-token prediction on teacher-generated rationales and labels—against five baselines spanning two categories. \emph{No Rationale}: Base (frozen, label-only) and SFT (fine-tuned, label-only). \emph{Unsupervised Rationale}: Base-Reasoning (frozen, rationale generated but not supervised), GRPO~\citep{shao2024deepseekmath} (fine-tuned with reward on final answer), and GRPO-Joint (fine-tuned with reward over the full rationale-and-answer sequence).
The prompt templates used for each baseline and implementation details are provided in Appendix~\ref{app:prompts} and Appendix~\ref{app:implementation}, respectively.

\paragraph{Results.}

\begin{table}[t]
    \centering
    \small
    \caption{Persuasiveness classification performance of student models on the PVP test split. \emph{Our method} (last row) fine-tunes the student using supervision on teacher-generated rationales from diverse reasoning perspectives. \textbf{Bold} indicates the best result within each student column.}
    \label{tab:classification_results}
    \begin{tabular}{lcccccccc}
    \toprule
     & \multicolumn{4}{c}{\textbf{Qwen2.5-VL-7B-Instruct}} & 
       \multicolumn{4}{c}{\textbf{Phi-3.5-Vision-Instruct}} \\
    \cmidrule(lr){2-5} \cmidrule(lr){6-9}
    \textbf{Setup} & \textbf{Bal. Acc.} & \textbf{Prec.} & \textbf{Rec.} & \textbf{F1} & 
                     \textbf{Bal. Acc.} & \textbf{Prec.} & \textbf{Rec.} & \textbf{F1} \\
    \midrule
    \multicolumn{9}{l}{\emph{No Rationale:}} \\
    \quad Base           & 0.627 & 0.603 & 0.815 & 0.693 & 0.657 & 0.636 & 0.785 & 0.703 \\
    \quad SFT            & 0.731 & 0.726 & 0.766 & 0.745 & 0.721 & 0.699 & 0.804 & \textbf{0.748} \\
    \midrule
    \multicolumn{9}{l}{\emph{Unsupervised Rationale:}} \\
    \quad Base-Reasoning & 0.632 & 0.610 & 0.804 & 0.694 & 0.599 & 0.583 & 0.785 & 0.669 \\
    \quad GRPO           & 0.716 & 0.697 & 0.794 & 0.742 & 0.667 & 0.718 & 0.570 & 0.635 \\
    \quad GRPO-Joint     & 0.651 & 0.621 & \textbf{0.841} & 0.714 & 0.601 & 0.575 & \textbf{0.897} & 0.701 \\
    \midrule
    \multicolumn{9}{l}{\emph{Supervised Rationale:}} \\
    \quad Reasoning-SFT  & \textbf{0.766} & \textbf{0.784} & 0.748 & \textbf{0.766} & \textbf{0.746} & \textbf{0.746} & 0.745 & 0.746 \\
    \bottomrule
    \end{tabular}
\end{table}




Table~\ref{tab:classification_results} reports persuasiveness classification 
performance on the PVP test split.
Reasoning-SFT achieves the highest balanced accuracy on both 
Qwen2.5-VL-7B-Instruct and Phi-3.5-vision-instruct, improving over 
the strongest baseline by 0.035 and 0.025, respectively. For F1 
score, Reasoning-SFT achieves the best result on 
Qwen2.5-VL-7B-Instruct and is 0.002 below the best on 
Phi-3.5-vision-instruct. These results suggest rationale 
supervision helps student models generate rationales that support 
persuasiveness label prediction.

On Phi-3.5-vision-instruct, generating rationales without supervising 
them is consistently ineffective. Base-Reasoning reduces balanced 
accuracy by 0.058 relative to Base, and both GRPO variants fall below 
label-only SFT. In contrast, Reasoning-SFT outperforms all 
Phi-3.5-vision-instruct baselines in balanced accuracy, confirming 
that this model benefits from explicit rationale supervision.

Among the unsupervised rationale baselines, GRPO-Joint---which applies reward over the full rationale-and-answer sequence---does not improve over GRPO on either backbone. On Phi-3.5-vision-instruct, it shows signs of prediction collapse: recall reaches 0.897 while balanced accuracy falls below Base, suggesting that extending the reward to cover the rationale leads the model to predict persuasive for nearly all inputs rather than producing more useful reasoning.

We further analyze the effect of individual prompt types in Appendix~\ref{app:prompt-ablation}. Reasoning-SFT outperforms all single-prompt variants in balanced accuracy for both student models, and no single prompt type consistently outperforms the others across models and metrics.


\subsection{Theoretical Justification of Dual-Axis Rationale Supervision}
\label{sec:theoretical-justification}

The preceding experiments suggest that training on rationales from multiple prompt types is more effective than using a single source. We provide a simple coverage-based motivation: if all rationales support the same target label, why can multiple sources provide more useful supervision than one?

For an image--message input \(x\) with target label \(y^\star(x)\), let \(\mathcal C(x)\) denote the set of valid rationales that support \(y^\star(x)\). Different prompt types are treated as different valid reasoning paths rather than contradictory supervision. If the selected rationales cover this space well, then good training performance on them should transfer to other valid rationales. Let \(S(x)\subseteq \mathcal C(x)\) be the training rationale set, and let \(\psi:\mathcal C(x)\to\mathbb R^d\) be a fixed embedding map. We define the coverage radius as \(\rho(S;x)=\sup_{r\in\mathcal C(x)}\min_{s\in S(x)}\|\psi(r)-\psi(s)\|_2\). Let \(\theta\) be the model parameters, and let \(\ell(\theta;x,r)\) denote the rationale-conditioned loss when using rationale \(r\) as supervision for input \(x\). The following theorem formalizes this intuition under an idealized Lipschitz assumption.

\begin{theorem}[Coverage-based motivation]
\label{thm:coverage-main}
Assume that \(\ell(\theta;x,r)\) is \(L\)-Lipschitz in the rationale embedding for some constant \(L>0\), i.e.,
\[
|\ell(\theta;x,r)-\ell(\theta;x,r')|
\le
L\|\psi(r)-\psi(r')\|_2
\quad
\text{for all } r,r'\in\mathcal C(x).
\]
Then, for any selected rationale set \(S(x)\subseteq\mathcal C(x)\),
\[
\sup_{r\in\mathcal C(x)}
\ell(\theta;x,r)
\le
\max_{s\in S(x)}
\ell(\theta;x,s)
+
L\rho(S;x).
\]
\end{theorem}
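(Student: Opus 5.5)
The argument is a pointwise nearest-neighbour bound followed by taking suprema. I fix \(\theta\) and \(x\) and pick an arbitrary valid rationale \(r\in\mathcal C(x)\). The goal is
\[
\ell(\theta;x,r)\le \max_{s\in S(x)}\ell(\theta;x,s)+L\rho(S;x).
\]
Since the right-hand side does not depend on \(r\), this inequality gives the stated bound on \(\sup_{r\in\mathcal C(x)}\ell(\theta;x,r)\) directly.

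For the pointwise step, I choose \(s_r\in S(x)\) attaining \(\min_{s\in S(x)}\|\psi(r)-\psi(s)\|_2\). This is possible because \(S(x)\) is finite, which the use of \(\max\) and \(\min\) in the statement implicitly assumes. Since \(S(x)\subseteq\mathcal C(x)\), the Lipschitz hypothesis applies to the pair \((r,s_r)\) and gives
\[
\ell(\theta;x,r)\le \ell(\theta;x,s_r)+L\|\psi(r)-\psi(s_r)\|_2 .
\]
I then bound the two terms separately. The first satisfies \(\ell(\theta;x,s_r)\le\max_{s\in S(x)}\ell(\theta;x,s)\). For the second, by the choice of \(s_r\) and the definition of the coverage radius,
\[
\|\psi(r)-\psi(s_r)\|_2=\min_{s\in S(x)}\|\psi(r)-\psi(s)\|_2\le\rho(S;x),
\]
and \(L>0\) preserves the inequality. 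Taking the supremum over \(r\in\mathcal C(x)\) then finishes the proof.

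The only delicate point is well-posedness. If \(S(x)\) is allowed to be infinite, I replace \(\min\) and \(\max\) by \(\inf\) and \(\sup\) and use an \(\varepsilon\)-approximate nearest point \(s_r\), obtaining the bound plus \(L\varepsilon\); letting \(\varepsilon\to 0\) recovers the claim. If \(\rho(S;x)=\infty\), the bound holds trivially. I do not expect a genuine obstacle beyond this bookkeeping.
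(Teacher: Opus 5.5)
Your proposal is correct and is essentially the paper's proof: fix $r\in\mathcal C(x)$, take a nearest selected rationale $s_r\in S(x)$ with $\|\psi(r)-\psi(s_r)\|_2\le\rho(S;x)$, apply the Lipschitz bound, bound $\ell(\theta;x,s_r)$ by the maximum over $S(x)$, and take the supremum over $r$. Your remarks on finiteness of $S(x)$, the $\varepsilon$-approximate nearest point, and the case $\rho(S;x)=\infty$ are valid bookkeeping that the paper leaves implicit.
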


Theorem~\ref{thm:coverage-main} states that the worst-case loss over all valid rationales is bounded by the worst selected-rationale loss plus a coverage error term \(L\rho(S;x)\). Thus, with comparable empirical fit, broader coverage yields a tighter bound. Dual-axis prompting aims to reduce this error by sampling complementary evidence polarities and visual granularities, though this remains a motivation rather than a guarantee: useful diversity should be label-consistent and non-redundant. Appendix~\ref{app:reasoning-perspective-diversity} extends this view through spectral conditioning and redundancy reduction, showing how diverse rationales can improve conditioning and reduce redundancy.

\section{Faithfulness Evaluation for Persuasiveness Rationales}
\label{sec:rationale-evaluation}
Reasoning-SFT with dual-axis rationales improves visual persuasiveness prediction, and its gain over label-only SFT suggests that intermediate reasoning provides signal beyond label supervision. However, predictive gains alone do not establish rationale faithfulness: generated rationales may act as spurious intermediates, while the model's final decision remains insensitive to the visual evidence they cite. This concern is salient in visual persuasion, where the task requires not only identifying message-relevant visual elements but also explaining how they support the image's persuasive logic.
Evaluating such rationales is non-trivial because the same image, target message, and label can admit multiple valid rationales, making answer-matching insufficient. Existing faithfulness methods transfer poorly to our setting, as visual persuasion typically lacks a single gold rationale and relies on interpretive rather than extractive evidence. We therefore evaluate generated rationales along three complementary axes: whether they support the predicted decision, whether they are grounded in the image, and whether the model's decision is sensitive to the cited visual evidence.


\begin{figure}
    \centering
    \includegraphics[width=0.95\linewidth]{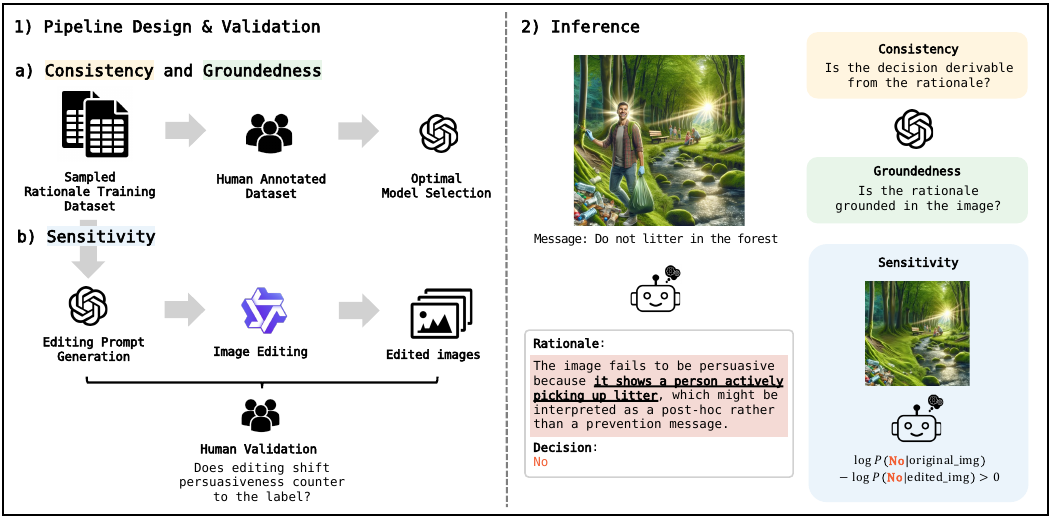}
    \caption{
    Overview of faithfulness evaluation pipelines.
    (Left) We validate judge-based pipelines for \emph{rationale-to-decision consistency}, and \emph{rationale-to-image groundedness} against human annotations, and the editing pipeline for \emph{rationale-to-decision sensitivity}.
    (Right) At inference, GPT-5 judges consistency and groundedness, while sensitivity is measured by the log-probability gap between original and edited images for the predicted label.
    }
    \label{fig:eval-pipeline}
\end{figure}

\subsection{Evaluation Metrics}

Figure~\ref{fig:eval-pipeline} summarizes the evaluation pipelines for the three faithfulness metrics.
All three metrics are computed as binary $\{0, 1\}$ scores, where $1$ indicates a faithful rationale along the corresponding axis.
For \emph{rationale-to-decision consistency} and \emph{rationale-to-image groundedness}, we use GPT-5~\citep{gpt5} as an LLM-as-a-judge that produces binary yes/no judgments, after validating candidate judges against majority-vote human annotations.
For \emph{rationale-to-decision sensitivity}, we use rationale-conditioned counterfactual image editing and measure whether the model's confidence in its original decision decreases after the cited visual evidence is modified.
Details of human annotation, judge selection, editing validation, and prompt templates are provided in Appendix~\ref{app:detail-rationale}.

\paragraph{Rationale-to-Decision Consistency.}
Rationale-to-decision consistency measures whether the predicted persuasiveness decision is derivable from the generated rationale.
This metric tests whether the rationale provides sufficient evidence for the model's predicted label.
Because visual persuasion admits multiple valid rationales for the same image, message, and label, we evaluate consistency using task-specific criteria rather than matching the rationale against a single gold explanation.

Since no ground-truth consistency labels exist, we sample 10\% of training data and ask three annotators to judge whether each rationale is sufficient to derive the decision.
We obtain reference labels by majority vote, with near-perfect inter-annotator agreement (Fleiss' $\kappa = 0.9962$).
Among candidate judges, GPT-5 achieves the highest agreement with human labels, with balanced accuracy and F1 of 0.995.
We therefore adopt GPT-5 as the final judge and assign a score of $1$ when the judge determines that the decision is derivable from the rationale, and $0$ otherwise.

\paragraph{Rationale-to-Image Groundedness.}
Rationale-to-image groundedness measures whether the generated rationale is grounded in visual evidence from the image.
This metric targets rationales that may appear to support the predicted decision but rely on visual claims that are absent, hallucinated, inferred from the target message, or not visible in the image. Groundedness is challenging in visual persuasion because the relevant evidence is not limited to localizable objects or regions.
Visual persuasiveness rationales often involve interpretive evidence such as affective tone, symbolism, composition, and image--message alignment, which are not well captured by existing object-centric grounding metrics~\citep{park2018multimodal, selvaraju2019taking, salewski2020clevr}.
This limitation is amplified in our setting, where generated images can contain blurred objects, distorted details, pseudo-text, and ambiguous cues.

Since no pre-existing annotations capture task-specific rationale-to-image groundedness in visual persuasion, we construct validation labels using three annotators and majority vote.
The resulting human labels show moderate agreement (Fleiss' $\kappa = 0.68$).
We then evaluate candidate judge models against these labels and adopt GPT-5 as the final judge for this metric. The validation labels are highly imbalanced (yes:no $\approx$ 8.5:1), and the judge achieves a balanced accuracy of 0.57, indicating limited reliability on minority-class cases.
Nevertheless, the judge does not simply collapse to the majority label, and its predicted \textit{No} rate closely matches that of the human labels.
We therefore treat groundedness as a complementary diagnostic metric and interpret it together with the other faithfulness metrics.
A rationale receives a groundedness score of $1$ when the judge determines that its visual claims are grounded in the image, and $0$ otherwise.




\paragraph{Rationale-to-Decision Sensitivity.}
Rationale-to-decision sensitivity measures whether the model's decision is sensitive to visual evidence cited in the rationale. Whereas consistency and groundedness assess properties of the rationale itself, sensitivity tests whether intervening on rationale-cited evidence changes the model's decision behavior. This metric follows perturbation-based faithfulness evaluation, which tests whether removing or altering the evidence identified in an explanation changes the model's prediction~\citep{deyoung2020eraser, hooker2019benchmark, atanasova2023faithfulness}. Because visual persuasiveness rationales describe semantic visual evidence rather than pixel-level masks, we implement this idea through rationale-conditioned counterfactual image editing in three steps.

Given an original image $I$, target message $m$, model-generated rationale $r$, and the model's predicted label $\hat{y}$, we apply rationale-conditioned counterfactual image editing in three steps.
First, GPT-5 converts $r$ into an editing instruction that weakens or reverses the rationale-cited visual evidence supporting $\hat{y}$, and Qwen-Image-Edit~\citep{wu2025qwenimagetechnicalreport} generates the edited image $I'$.
Second, we feed $I'$ and the same target message $m$ back into the model using the same prompt.
Third, we compute $\Delta_{\mathrm{sens}} = \log P(\hat{y} \mid I, m) - \log P(\hat{y} \mid I', m)$ and assign a binary score of $1$ if $\Delta_{\mathrm{sens}} > 0$ and $0$ otherwise.
A positive value indicates that the model's confidence in its original prediction decreases after the edit, suggesting that the decision is sensitive to the rationale-cited evidence.

Since groundedness depends on the quality of the counterfactual edit, we further validate the editing pipeline with three annotators, who show substantial agreement (Fleiss' $\kappa = 0.6927$) and judge $74.9\%$ of edited images as successfully shifting persuasiveness against the predicted label.

\subsection{Rationale Faithfulness Results}

\paragraph{Rationale-to-Decision Consistency.}
Table~\ref{tab:auto-result} reports rationale faithfulness across student models, interpreted together with the prediction results in Table~\ref{tab:classification_results}. For rationale-to-decision consistency, Reasoning-SFT achieves the strongest or tied-strongest results across both student models.
This is consistent with its next-token training objective over rationales and answers, which directly supervises rationales that support the corresponding predicted decision.
In contrast, GRPO rewards only the final answer, leaving the rationale itself underconstrained; accordingly, it yields lower consistency for both student models.
Although the Base models also exhibit high consistency, especially for student \emph{Phi}, their reasoning fails to enhance prediction accuracy.

\paragraph{Rationale-to-Image Groundedness.}
For rationale-to-image groundedness, GRPO improves over the Base setup for both student models.
Together with the results above, this pattern suggests one possible interpretation: answer-level optimization can coincide with more image-supported rationales while still leaving their relation to the decision underconstrained.
This limitation is evident in Phi-GRPO, which obtains the highest groundedness among \emph{Phi} student models but the weakest prediction F1, showing that visually grounded rationales can still be misaligned with the model's decision or the correct task label.
Reasoning-SFT yields the best groundedness for \emph{Qwen} but not for \emph{Phi}, suggesting backbone-dependent effects.

\paragraph{Rationale-to-Decision Sensitivity.}
For rationale-to-decision sensitivity, the Base models obtain the highest scores despite weak prediction performance.
One interpretation is that, since the Base setup uses a frozen model prompted to reason before answering, its rationales more directly expose the model's native decision heuristics; editing the cited evidence can therefore strongly reduce confidence in the original prediction, even when that prediction is incorrect.
Reasoning-SFT improves prediction performance and decision consistency but does not maximize sensitivity, suggesting that its rationales are aligned with the predicted answer while the decision may not depend strongly on any single cited visual cue; alternatively, Reasoning-SFT may make the model more robust to individual counterfactual edits.
GRPO shows the lowest sensitivity for both student models, especially \emph{Phi}, further indicating that answer-only reward does not reliably preserve the behavioral link between rationale-cited evidence and the model's decision.

Taken together, these results indicate that training objectives shape distinct dimensions of rationale faithfulness rather than uniformly improving rationales.
Prediction performance is therefore not a reliable proxy for rationale faithfulness.
These findings motivate training objectives that jointly encourage task correctness, decision consistency, visual grounding, and behavioral sensitivity, rather than optimizing prediction accuracy or any single faithfulness criterion in isolation.


\begin{table}[t]
    \centering
    \small
    \caption{Rationale faithfulness across student models. \textbf{Bold} indicates the best results.}
    \label{tab:auto-result}
    \begin{tabular}{llccc}
    \toprule
    \textbf{Student Model} & \textbf{Setup} & \textbf{R-D Consistency} & \textbf{R-I Groundedness} & \textbf{R-D Sensitivity} \\
    \midrule
    \multirow{3}{*}{Qwen2.5-VL-7B-Instruct} 
      & Base       & 0.962 & 0.727 & \textbf{0.885} \\
      & GRPO       & 0.900 & 0.780 & 0.737 \\
      & Reasoning-SFT  & \textbf{0.995} & \textbf{0.847} & 0.746 \\
    \midrule
    \multirow{3}{*}{Phi-3.5-Vision-Instruct}  
      & Base       & \textbf{0.990} & 0.708 & \textbf{0.766} \\
      & GRPO       & 0.904 & \textbf{0.751} & 0.517 \\
      & Reasoning-SFT  & \textbf{0.990} & 0.703 & 0.665 \\
    \bottomrule
    \end{tabular}
\end{table}

\subsection{Comparison Between Rationale Faithfulness and Human Preferences}
\label{sec:rationale-human}
In order to analyze how faithfulness-based evaluation relates to human preference, we examine whether the proposed automatic faithfulness metrics align with human preferences over generated rationales. 
For human preference evaluation, we annotate 50 items, each with 15 model-pair comparisons, yielding 750 judgments.
To assess inter-rater agreement, three annotators independently evaluate an overlapping subset of 10 items covering all 15 pairwise comparisons.
The annotations achieve Fleiss' $\kappa = 0.309$, consistent with prior findings that pairwise preference judgments for open-ended generation often yield only fair-to-moderate agreement~\citep{clark2021all, karpinska2021perils}.

Among the three automatic faithfulness metrics, rationale-to-decision sensitivity shows the strongest association with human preference.
At the pairwise level, Sensitivity achieves the highest correlation with human preference rates
(\(r=0.771\), \(\rho=0.611\), \(p=0.016\)), where \(r\) denotes Pearson's correlation coefficient, \(\rho\) denotes Spearman's rank correlation coefficient, and \(p\) denotes the corresponding significance value.
By contrast, rationale-to-decision consistency and rationale-to-image groundedness show no clear association with human preference.

This association is mainly observed within the same model family.
Sensitivity matches the human preference ranking for Qwen-internal and Phi-internal comparisons
(\(\rho=1.000\) for each, 3 pairs each),
but shows almost no correlation in cross-family Qwen--Phi comparisons
(\(\rho=-0.008\), 9 pairs).
Thus, Sensitivity captures meaningful differences among comparable models, while cross-family preferences may depend on generation-quality factors not captured by faithfulness metrics alone.

Overall, this analysis suggests that human preference and rationale faithfulness are related but distinct.
\emph{Rationale-to-decision sensitivity} does not fully reproduce human preferences, but captures a faithfulness-related signal for rationale comparison. The visualization of the analysis results for human rationale preference and rationale faithfulness is in Appendix~\ref{app:human}.

\section{Conclusion}
In this paper, we investigated the faithfulness of rationales generated by MLLMs for visual persuasion reasoning. We first proposed dual-axis rationale supervision, varying prompts along evidence polarity and visual granularity, to provide a diverse and grounded training signal. Fine-tuning improves persuasiveness classification over label-only and unsupervised-rationale baselines across two student models. We further introduced, to our knowledge, the first rationale faithfulness evaluation framework for visual persuasion, assessing rationales along rationale-to-decision consistency, rationale-to-image groundedness, and rationale-to-decision sensitivity. Applying this framework reveals a mismatch: improved prediction performance does not reliably correspond to more faithful rationales, exposing a limitation of answer-correctness-only evaluation for visual persuasion reasoning.

\section{Future Directions}
\label{sec:future}
Our findings open two directions for future work. First, the proposed faithfulness evaluation framework can serve as a training signal. Reasoning-SFT shows that supervised rationale learning with diverse teacher-generated rationales improves visual persuasiveness prediction and produces decision-consistent rationales. A natural next step is to incorporate faithfulness signals into training. Future objectives could optimize label prediction, rationale generation, visual groundedness, and counterfactual sensitivity, encouraging rationales that are plausible and behaviorally connected to decisions.

Second, our dual-axis rationale design points toward more scalable rationale supervision for visual persuasion. The full dual-axis prompt set improves over single-prompt variants, showing that diverse reasoning perspectives are useful. Future work could develop coverage-aware rationale selection under a fixed budget, or organize rationale types into a curriculum from broad image--message reasoning to finer-grained, counter-aware evidence use. More broadly, this design principle could extend beyond evidence polarity and visual granularity to dimensions such as affect, symbolism, context, audience values, and actionability, supporting richer personalized visual persuasion evaluation.

\section{Limitations}
\label{sec:limitations}
This work has several limitations. Our evaluation methodology, including human evaluation and LLM-as-a-judge selection, may require refinement, and should be viewed as an initial step rather than a complete solution. Future work should develop more reliable protocols for faithful visual persuasion reasoning. Visual persuasion evaluation can support beneficial applications such as public health communication, but may also be misused to optimize misleading or manipulative persuasive imagery, motivating evaluation beyond answer-only accuracy.

\bibliographystyle{unsrt}
\bibliography{neurips}


\appendix
\section{Prompt Templates for Reasoning Data Collection}
\label{app:prompt-reasoning-data}

This appendix provides the prompt templates used to collect visual persuasion reasoning.
Each prompt is conditioned on an input image, an intended message, and a target binary persuasiveness label.
The placeholder {\{message\}} is replaced with the intended message associated with the image.
The token {<image>} denotes the image input to the vision-language teacher model.

\subsection{One-sided Holistic Reasoning}

\paragraph{Short rationale: persuasive case.}
\begin{promptbox}
<image>
Message: {message}

This image conveys the intended message persuasively.
In 1-2 sentences, explain why this image is persuasive.

Output exactly in this format:
<reasoning>
[Your explanation here]
</reasoning>
\end{promptbox}

\paragraph{Short rationale: unpersuasive case.}
\begin{promptbox}
<image>
Message: {message}

This image fails to convey the intended message persuasively.
In 1-2 sentences, explain why this image fails to be persuasive.

Output exactly in this format:
<reasoning>
[Your explanation here]
</reasoning>
\end{promptbox}

\paragraph{Long rationale: persuasive case.}
\begin{promptbox}
<image>
Message: {message}

This image conveys the intended message persuasively.
In 3-4 sentences, explain why this image is persuasive.

Output exactly in this format:
<reasoning>
[Your explanation here]
</reasoning>
\end{promptbox}

\paragraph{Long rationale: unpersuasive case.}
\begin{promptbox}
<image>
Message: {message}

This image fails to convey the intended message persuasively.
In 3-4 sentences, explain why this image fails to be persuasive.

Output exactly in this format:
<reasoning>
[Your explanation here]
</reasoning>
\end{promptbox}

\subsection{Image-Description-Guided Reasoning}

\paragraph{Image depiction rationale: persuasive case.}
\begin{promptbox}
<image>
Message: {message}

This image conveys the intended message persuasively.
In 1-2 sentences, briefly describe what the image depicts.
Then in 1-2 sentences, explain why this image is persuasive.

Output exactly in this format:
<reasoning>
Image description: [1-2 sentences]
Persuasiveness: [1-2 sentences]
</reasoning>
\end{promptbox}

\paragraph{Image depiction rationale: unpersuasive case.}
\begin{promptbox}
<image>
Message: {message}

This image fails to convey the intended message persuasively.
In 1-2 sentences, briefly describe what the image depicts.
Then in 1-2 sentences, explain why this image fails to be persuasive.

Output exactly in this format:
<reasoning>
Image description: [1-2 sentences]
Persuasiveness: [1-2 sentences]
</reasoning>
\end{promptbox}

\subsection{Element-Level Reasoning}

\paragraph{Single visual element rationale: persuasive case.}
\begin{promptbox}
<image>
Message: {message}

This image conveys the intended message persuasively.
Identify one specific visual element that most strongly supports its
persuasiveness.
Then explain why that element makes the image persuasive.

Output exactly in this format:
<reasoning>
[visual element]: [reason]
</reasoning>
\end{promptbox}

\paragraph{Single visual element rationale: unpersuasive case.}
\begin{promptbox}
<image>
Message: {message}

This image fails to convey the intended message persuasively.
Identify one specific visual element that most strongly undermines its
persuasiveness.
Then explain why that element makes the image unpersuasive.

Output exactly in this format:
<reasoning>
[visual element]: [reason]
</reasoning>
\end{promptbox}

\paragraph{Multiple visual elements rationale: persuasive case.}
\begin{promptbox}
<image>
Message: {message}

This image conveys the intended message persuasively.
Examine the visual elements and identify three specific elements that strengthen
its persuasiveness.

Output exactly in this format:
<reasoning>
Rationale 1:
[visual element]: [reason]
Rationale 2:
[visual element]: [reason]
Rationale 3:
[visual element]: [reason]
</reasoning>
\end{promptbox}

\paragraph{Multiple visual elements rationale: unpersuasive case.}
\begin{promptbox}
<image>
Message: {message}

This image fails to convey the intended message persuasively.
Examine the visual elements and identify three specific elements that undermine
its persuasiveness.

Output exactly in this format:
<reasoning>
Rationale 1:
[visual element]: [reason]
Rationale 2:
[visual element]: [reason]
Rationale 3:
[visual element]: [reason]
</reasoning>
\end{promptbox}

\subsection{Support--Weakness Reasoning}

\paragraph{Balanced rationale: persuasive case.}
\begin{promptbox}
<image>
Message: {message}

This image conveys the intended message persuasively.
In one sentence, explain what makes this image persuasive.
Then in one sentence, reflect on any aspect that might still weaken the message.
Finally, in one to two sentences, summarize overall why this image is persuasive
based on the positive and negative aspects.

Output exactly in this format:
<reasoning>
Positive: [one sentence]
Negative: [one sentence]
Overall reasoning: [one-two sentences]
</reasoning>
\end{promptbox}

\paragraph{Balanced rationale: unpersuasive case.}
\begin{promptbox}
<image>
Message: {message}

This image fails to convey the intended message persuasively.
In one sentence, explain why this image fails to be persuasive.
Then in one sentence, reflect on any aspect that might still support the message
. Finally, in one to two sentences, summarize overall why this image is 
unpersuasive based on the negative and positive aspects.

Output exactly in this format:
<reasoning>
Negative: [one sentence]
Positive: [one sentence]
Overall reasoning: [one-two sentences]
</reasoning>
\end{promptbox}

\subsection{Element-Level Support--Weakness Reasoning}

\paragraph{Element-level support--weakness rationale: persuasive case.}
\begin{promptbox}
<image>
Message: {message}

This image conveys the intended message persuasively.
Examine the visual elements and identify one specific element that strengthens
its persuasiveness. Then, critically reflect on whether any element might still
weaken the message despite its overall effectiveness. Finally, in one to two
sentences, summarize overall why this image is persuasive based on the positive
and negative aspects.

Output exactly in this format:
<reasoning>
Positive rationale:
[visual element]: [reason]
Negative rationale:
[visual element]: [reason]
Overall reasoning: [one-two sentences]
</reasoning>
\end{promptbox}

\paragraph{Element-level support--weakness rationale: unpersuasive case.}
\begin{promptbox}
<image>
Message: {message}

This image fails to convey the intended message persuasively.
Examine the visual elements and identify one specific element that undermines
its persuasiveness. Then, critically reflect on whether any element might still
support the message despite its overall ineffectiveness. Finally, in one to two
sentences, summarize overall why this image is unpersuasive based on the
negative and positive aspects.

Output exactly in this format:
<reasoning>
Negative rationale:
[visual element]: [reason]
Positive rationale:
[visual element]: [reason]
Overall reasoning: [one-two sentences]
</reasoning>
\end{promptbox}

\section{Dataset Reconstruction Details}
\label{app:dataset-reconstruction}

We describe the full annotator-aware binary voting procedure used to
reconstruct the PVP dataset as a binary classification task.

\paragraph{Annotator-aware Binary Voting.}
The per-annotator mean persuasiveness scores range from 0.02 to 10,
with 22.2\% of annotators having a mean score below 3 and 12.1\%
having a mean score above 7. This indicates that some annotators
tend to assign systematically low scores while others tend to assign
systematically high scores. To mitigate this bias, we convert each
annotator's raw score into a binary vote relative to that annotator's
own score distribution, rather than averaging raw scores across
annotators. For each annotator $a$, we compute the first and third
quartiles of that annotator's 0--10 scores. Let $r_i^{(a)}$ be
annotator $a$'s score for image--message pair $i$, and let
$Q_1^{(a)}$ and $Q_3^{(a)}$ denote the first and third quartiles of
annotator $a$'s scores. We define the annotator-level vote as
\[
v_i^{(a)} =
\begin{cases}
1 & \text{if } r_i^{(a)} > Q_3^{(a)},\\
0 & \text{if } r_i^{(a)} < Q_1^{(a)}.
\end{cases}
\]
Scores falling between $Q_1^{(a)}$ and $Q_3^{(a)}$ are treated as
ambiguous and do not contribute a vote.

\paragraph{Label Aggregation and Dataset Split.}
Image--message pairs for which at least 75\% of the collected votes
are persuasive receive a persuasive label (1); pairs for which at
least 75\% of votes are unpersuasive receive an unpersuasive label
(0). Pairs that do not meet either threshold are discarded. We then
split the retained pairs by message—ensuring that no message appears
in both splits—while maintaining persuasiveness label balance between
splits. This message-disjoint split enables evaluation of
out-of-message generalization. The final training split contains 820
image--message pairs covering 309 messages, and the test split
contains 209 image--message pairs covering 79 messages.

\section{Prompt Templates for Training}
\label{app:prompts}

We provide the prompt templates used for each baseline and our proposed method.
Base and SFT share the same prompt, as do Base-Reasoning and GRPO.

\subsection{Base and SFT}

\begin{promptbox}
<image>
Message: {message}
Does this image persuasively convey the message?
Respond only with Yes or No.
\end{promptbox}

\subsection{Base-Reasoning and GRPO}

\paragraph{Turn 1.}
\hfill
\begin{promptbox}
<image>
Message: {message}
Consider whether this image persuasively conveys the message.
\end{promptbox}

\paragraph{Turn 2.}
\hfill
\begin{promptbox}
Based on the image, the message, and your reasoning, determine whether the 
image persuasively conveys the message.
Respond only with Yes or No.
\end{promptbox}

\subsection{GRPO-Joint}

\begin{promptbox}
<image>
Message: {message}
An image is persuasive if its visual content makes the message more convincing 
or more emotionally compelling. Given the image and the message, determine 
whether the image persuasively conveys the message. First, reason about the
image's persuasiveness. Then answer Yes or No.

Output exactly in this format:
<reasoning>
[Your reasoning here]
</reasoning>
<answer>
[Yes or No]
</answer>
\end{promptbox}

\subsection{Reasoning-SFT}

\paragraph{Turn 1.}
\hfill
\begin{promptbox}
<image>
Message: {message}
Consider whether this image persuasively conveys the message.
\end{promptbox}

\paragraph{Turn 2.}
\hfill
\begin{promptbox}
Based on the image, the message, and your reasoning, determine whether the 
image persuasively conveys the message.
\end{promptbox}

\section{Implementation Details}
\label{app:implementation}

\subsection{Supervised Fine-Tuning}
We fine-tune two vision--language models, Qwen2.5-VL-7B-Instruct~\citep{qwen2.5-VL} and Phi-3.5-vision-instruct~\citep{abdin2024phi3technicalreporthighly}, using LoRA (Low-Rank Adaptation)~\citep{hu2021loralowrankadaptationlarge}. Model-specific implementation details are reported in Table~\ref{tab:impl-sft}. For both models, LoRA adapters ($r=32$, $\alpha=64$, dropout $0.01$) are applied to all linear layers in the language model except \emph{lm\_head} and \emph{embed\_tokens}, while the base language-model, vision encoder, project and merger weights are frozen. We optimize with AdamW~\citep{loshchilov2019decoupledweightdecayregularization} using a cosine learning-rate schedule, a peak LoRA learning rate of $1\times10^{-4}$, $3\%$ warmup, and no weight decay. All runs use an effective batch size of $32$, bf16 precision, gradient checkpointing, and DeepSpeed ZeRO-2~\citep{rasley2020deepspeed, rajbhandari2020zero} on a single GPU. Qwen2.5-VL-7B-Instruct is fine-tuned on an A100 GPU, while Phi-3.5-vision-instruct is fine-tuned on an RTX A6000 GPU.

For model selection, we reserve $10\%$ of the training set as a validation set, train each model for up to 10 epochs, and select the epoch with the best validation performance. We then retrain each model on the full training set for the selected number of epochs and report performance on the held-out test set.

\begin{table}[t]
\centering
\small
\caption{Model-specific implementation settings for supervised fine-tuning. Shared settings, including the LoRA configuration, optimizer, learning-rate schedule, effective batch size, and precision, are described in the text.}
\label{tab:impl-sft}
\begin{tabular}{@{}lll@{}}
\toprule
 & Qwen2.5-VL-7B-Instruct & Phi-3.5-vision-instruct \\
\midrule
Per-device batch $\times$ grad.\ accum. & $8\times4$ & $4\times8$ \\
Image preprocessing & Patch-grid area $\in[1024,1280]\cdot 28^2$ px & 16 crops \\
\bottomrule
\end{tabular}
\end{table}

\subsection{Group Relative Policy Optimization}

\begin{table}[t]
\centering
\small
\caption{Model-specific implementation settings for GRPO and GRPO-Joint. Shared settings are
described in the text.}
\label{tab:impl-grpo}
\begin{tabular}{@{}lll@{}}
\toprule
 & Qwen2.5-VL-7B-Instruct & Phi-3.5-vision-instruct \\
\midrule
Per-device batch $\times$ grad.\ accum. & $4\times4$ & $1\times32$ \\
Image preprocessing & Patch-grid area $\in[1024,1280]\cdot28^2$\,px & 16 crops \\
\bottomrule
\end{tabular}
\end{table}

We train the same two student models using Group Relative Policy Optimization
(GRPO)~\citep{shao2024deepseekmath} with LoRA adapters ($r=32$, $\alpha=64$,
dropout $0.01$) applied to all linear layers in the language model except
\emph{lm\_head} and \emph{embed\_tokens}.
All non-LoRA components, including the vision encoder,
projector/merger, and base language-model weights, are frozen.

We implement two variants that differ in where the reward is applied.
GRPO uses a two-turn structure: the model generates a rationale in
Turn~1 and a single-token answer (\texttt{Yes} or \texttt{No}) in Turn~2,
with the gradient applied only to the Turn-2 tokens.
GRPO-Joint uses a single-turn structure: the model generates a
rationale and answer as one completion in the format
\texttt{<reasoning>...</reasoning><answer>...</answer>},
with the gradient applied over the full rationale-and-answer sequence.

Both variants use two binary reward functions.
The \emph{correctness reward} assigns $1.0$ if the answer matches the
ground-truth persuasiveness label and $0.0$ otherwise.
The \emph{format reward} differs between variants: for GRPO, it assigns $1.0$
if the Turn-2 output is exactly \texttt{Yes} or \texttt{No}; for GRPO-Joint,
it assigns $1.0$ if the single-turn completion follows the full
\texttt{<reasoning>...<\/reasoning><answer>...<\/answer>} structure.
The two rewards are combined with weights $1.0$ and $0.1$ for GRPO, and
$1.0$ and $0.25$ for GRPO-Joint, for correctness and format, respectively.
Model-specific implementation details are reported in
Table~\ref{tab:impl-grpo}.
The remaining settings are shared across both variants and both student
models: we optimize with AdamW using a cosine learning-rate schedule,
a peak LoRA learning rate of $5\times10^{-6}$, KL penalty $\beta=0.01$,
weight decay $0.1$, and $3\%$ warmup.
All runs use $G=4$ rollouts per prompt, generation temperature $1.2$, bf16
precision, gradient checkpointing, and DeepSpeed ZeRO-2 on a single GPU.

For model selection, we reserve $10\%$ of the training set as a validation
set, train each model for up to 3 epochs, and select the epoch with the best
validation performance. We then retrain each model on the full training set
for the selected number of epochs and report performance on the held-out test
set. Both Qwen2.5-VL-7B-Instruct and Phi-3.5-vision-instruct are fine-tuned
on an A100 GPU.


\section{Prompt-Type Ablation Results}
\label{app:prompt-ablation}

Table~\ref{tab:ablation_study} reports the performance of student
models fine-tuned on rationales from each individual prompt type,
compared with the merged Reasoning-SFT setting. We highlight two findings.

\paragraph{Combining reasoning perspectives improves balanced
accuracy.}
The Reasoning-SFT model achieves the highest balanced accuracy
for both student models, and the performance gain is not explained by
any single perspective alone. We also examined whether performance
differences are explained by rationale length (see
Appendix~\ref{app:rationale-length}): the Pearson correlation
between average rationale length and balanced accuracy is low for
both students, confirming that content and diversity matter more than
length.

\paragraph{The strongest single-prompt variant is inconsistent
across student models.}
Among single-prompt variants, Long performs best on
Qwen2.5-VL-7B-Instruct in both balanced accuracy and F1 score.
On Phi-3.5-vision-instruct, however, Short gives the
highest balanced accuracy while Long gives the highest
F1 score. This inconsistency indicates that no single prompt provides
a reliable substitute for Reasoning-SFT across both student
models and metrics.


\begin{table}[t]
    \centering
    \small
    \caption{
    Effect of prompt type on rationale-based fine-tuning. Each row reports student model performance after fine-tuning on rationales generated by the corresponding prompt. In the Axis column, S and C denote Support-Focused and Counter-Aware, respectively, while G and L denote Global and Local, respectively. \textbf{Bold} indicates the best score per model and metric.
    }
    \label{tab:ablation_study}
    \setlength{\tabcolsep}{6pt}
    \begin{tabular}{ll cc cc}
    \toprule
    & & \multicolumn{2}{c}{\textbf{Qwen2.5-VL-7B-Instruct}} & \multicolumn{2}{c}{\textbf{Phi-3.5-Vision-Instruct}} \\
    \cmidrule(lr){3-4} \cmidrule(lr){5-6}
    \textbf{Axis} & \textbf{Prompt type} & \textbf{Balanced Acc.} & \textbf{F1 Score} & \textbf{Balanced Acc.} & \textbf{F1 Score} \\
    \midrule
    S$\times$G & \quad Short              & 0.707 & 0.707 & 0.700 & 0.687 \\
               & \quad Long               & 0.741 & 0.741 & 0.684 & \textbf{0.749} \\
               & \quad Description        & 0.670 & 0.667 & 0.685 & 0.683 \\
    \midrule
    S$\times$L & \quad Visual Element     & 0.662 & 0.660 & 0.600 & 0.567 \\
               & \quad Visual Elements    & 0.698 & 0.698 & 0.622 & 0.633 \\
    \midrule
    C$\times$G & \quad Balanced           & 0.667 & 0.664 & 0.666 & 0.718 \\
    \midrule
    C$\times$L & \quad Visual Elements    & 0.670 & 0.667 & 0.694 & 0.698 \\
    \midrule
\rowcolor{gray!15} & \quad Reasoning-SFT & \textbf{0.766} & \textbf{0.766} & \textbf{0.746} & 0.746 \\
    \bottomrule
    \end{tabular}
\end{table}

\section{Rationale Length and Performance Across Prompt Types}
\label{app:rationale-length}
Average rationale length does not consistently account for the
performance differences across rationale sources. As shown in
Table~\ref{tab:rationale_length_ablation}, the strongest results
are not obtained by the longest rationales: across both student
models, Reasoning-SFT achieves the best balanced accuracy
despite not having the longest rationales.
The Pearson correlation between average token length and balanced
accuracy is 0.236 for Qwen2.5-VL-7B-Instruct and 0.305 for
Phi-3.5-vision-instruct, confirming that rationale length alone
does not explain the observed performance differences.

\begin{table*}[t]
    \centering
    \small
    \caption{
    Performance of different rationale sources with their average rationale lengths.
    Across both student models, the strongest results are not obtained by the longest rationales, suggesting that rationale length alone does not explain the performance differences.
    In the Axis column, S and C denote Support-Focused and Counter-Aware, respectively, while G and L denote Global and Local, respectively.
    }
    \label{tab:rationale_length_ablation}
    \setlength{\tabcolsep}{6pt}
    \begin{tabular}{ll ccc ccc}
    \toprule
    & & \multicolumn{3}{c}{\textbf{Qwen2.5-VL-7B-Instruct}} & \multicolumn{3}{c}{\textbf{Phi-3.5-Vision-Instruct}} \\
    \cmidrule(lr){3-5} \cmidrule(lr){6-8}
    \textbf{Axis} & \textbf{Rationale source} & \textbf{Avg. tokens} & \textbf{Bal. Acc.} & \textbf{F1}
    & \textbf{Avg. tokens} & \textbf{Bal. Acc.} & \textbf{F1} \\
    \midrule
    S$\times$G & \quad Short             & 65.0  & 0.707 & 0.707 & 77.1  & 0.700 & 0.687 \\
               & \quad Long              & 91.2  & 0.741 & 0.741 & 130.7 & 0.684 & 0.749 \\
               & \quad Description       & 113.0 & 0.670 & 0.667 & 127.8 & 0.685 & 0.683 \\
    \midrule
    S$\times$L & \quad Visual Element    & 51.6  & 0.662 & 0.660 & 44.7  & 0.600 & 0.567 \\
               & \quad Visual Elements   & 164.2 & 0.698 & 0.698 & 139.1 & 0.622 & 0.633 \\
    \midrule
    C$\times$G & \quad Balanced          & 102.8 & 0.667 & 0.664 & 124.7 & 0.666 & 0.718 \\
    \midrule
    C$\times$L & \quad Visual Elements   & 125.7 & 0.670 & 0.667 & 131.3 & 0.694 & 0.698 \\
    \midrule
    \rowcolor{gray!15} & \quad Reasoning-SFT & 101.9 & 0.766 & 0.766 & 110.8 & 0.746 & 0.746 \\
    \midrule
    \rowcolor{blue!10} \multicolumn{2}{l}{\quad Pearson $r$ with avg.\ tokens}
    & -- & 0.236 & 0.238
    & -- & 0.305 & 0.588 \\
    \bottomrule
    \end{tabular}
\end{table*}

\section{Theoretical and Empirical Justification for Reasoning-Perspective Diversity}
\label{app:reasoning-perspective-diversity}

We provide a theoretical and empirical justification for why rationales generated from diverse reasoning perspectives may be useful for fine-tuning visual persuasion models. The goal is not to prove that diversity always improves downstream performance, but to explain why, under a matched supervision budget, label-consistent and non-redundant rationales can provide a richer training signal than rationales from a single prompt source. We consider three complementary views: coverage, spectral conditioning, and redundancy reduction, and then report empirical diagnostics of the resulting rationale-source family.

\paragraph{Valid rationale space.}
For each image--message input $x$, let $y^\star(x)\in\mathcal{Y}$ denote the target persuasiveness label, and let
\[
\mathcal{R}(x,y^\star(x))
\]
denote the latent set of valid rationales that support $y^\star(x)$. All rationales considered for the same input are assumed to support the same target label. Thus, the relevant diversity comes from different valid reasoning paths, not from contradictory labels.

\paragraph{Rationale-source family.}
For each input $x$, we consider a family of rationale sets
\[
\mathcal{F}(x)=\{R_1(x),\dots,R_K(x)\},
\]
where each $R_k(x)\subseteq \mathcal{R}(x,y^\star(x))$ corresponds to one rationale-generation source, such as a prompt type or reasoning perspective. For $A\subseteq[K]$, define
\[
U_A(x):=\bigcup_{k\in A}R_k(x),
\qquad
U_{\mathrm{all}}(x):=\bigcup_{k=1}^K R_k(x).
\]

\paragraph{Matched-budget comparison.}
For a fixed rationale budget $B$, let
\[
R_A^{(B)}(x)\subseteq U_A(x),
\qquad
|R_A^{(B)}(x)|=B,
\]
or more generally, let $R_A^{(B)}(x)$ denote a subset satisfying a fixed token budget. This notation lets us compare different source compositions while holding the supervision budget fixed. We distinguish single-source, redundant multi-source, diverse multi-source, and full-family rationale supervision.

\paragraph{Rationale embedding.}
Let
\[
\psi(r)\in\mathbb{R}^d
\]
be an embedding of rationale $r$. In practice, $\psi$ may be a sentence embedding of the rationale text or a pooled representation from a vision--language model. Since all proxy metrics below depend on $\psi$, we compute them under fixed embedding backends and check robustness across multiple backends.

\subsection{Coverage Diversity}
\label{app:coverage_diversity}

Coverage diversity captures whether a selected rationale set covers multiple valid reasoning modes. For $R(x)\subseteq \mathcal{R}(x,y^\star(x))$, define
\[
r(R(x))
:=
\sup_{r\in\mathcal{R}(x,y^\star(x))}
\min_{s\in R(x)}
\|\psi(r)-\psi(s)\|_2.
\]
A smaller value means that every valid rationale is close to at least one selected rationale.

\begin{theorem}[Coverage-based generalization bound]
\label{thm:rationale_coverage_bound}
Assume that the rationale-conditioned loss $\ell(\theta;x,r)$ is $L$-Lipschitz in the rationale embedding:
\[
|\ell(\theta;x,r)-\ell(\theta;x,r')|
\le
L\|\psi(r)-\psi(r')\|_2
\qquad
\forall r,r'\in\mathcal{R}(x,y^\star(x)).
\]
Then, for any selected rationale set $R(x)\subseteq \mathcal{R}(x,y^\star(x))$,
\[
\sup_{r\in\mathcal{R}(x,y^\star(x))}
\ell(\theta;x,r)
\le
\max_{s\in R(x)}\ell(\theta;x,s)
+
Lr(R(x)).
\]
Therefore, among rationale sets with comparable empirical fit, a set with smaller coverage radius yields a tighter upper bound on worst-case loss over valid rationales.
\end{theorem}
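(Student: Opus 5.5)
The plan is a pointwise triangle-type argument followed by taking suprema; this is the same statement as Theorem~\ref{thm:coverage-main}, so the argument is identical. Fix $\theta$ and $x$, and abbreviate $\mathcal{R}:=\mathcal{R}(x,y^\star(x))$ and $R:=R(x)$. Assume $R$ is nonempty, since otherwise $r(R(x))$ is undefined (or $+\infty$) and the bound is vacuous. Also assume the maximum over $R$ is attained, e.g.\ $R$ finite as in the paper's setting; otherwise read $\max$ as $\sup$. If $r(R(x))=+\infty$ the inequality is trivial, so assume it is finite.

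First, take an arbitrary valid rationale $r\in\mathcal{R}$ and any $s\in R$. Because $R\subseteq\mathcal{R}$, the Lipschitz hypothesis applies to the pair $(r,s)$. This gives
\[
\ell(\theta;x,r)\le \ell(\theta;x,s)+L\|\psi(r)-\psi(s)\|_2\le \max_{s'\in R}\ell(\theta;x,s')+L\|\psi(r)-\psi(s)\|_2 .
\]
The first term on the right no longer depends on $s$. Take the infimum over $s\in R$ of the distance term, which is the minimum in the definition of the coverage radius, to get
\[
\ell(\theta;x,r)\le \max_{s\in R}\ell(\theta;x,s)+L\min_{s\in R}\|\psi(r)-\psi(s)\|_2 .
\]
This step uses $L>0$.

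Next, bound the distance term by its supremum over $r\in\mathcal{R}$, which is exactly $r(R(x))$. This gives a right-hand side independent of $r$. Taking the supremum of the left-hand side over $r\in\mathcal{R}$ then yields the claim. The concluding sentence of the theorem is an immediate reading of the bound: if two sets have comparable $\max_{s}\ell(\theta;x,s)$, the one with smaller $r(R(x))$ gives the smaller right-hand side.

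No step is a real obstacle. The only care points are the edge cases above: nonemptiness of $R$, attainment of the max or min, and a possibly infinite radius. Each is handled by replacing $\max$ and $\min$ with $\sup$ and $\inf$, which is harmless because every inequality above is preserved under suprema and infima.
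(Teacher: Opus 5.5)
Your proof is correct and follows the same route as the paper: apply the Lipschitz bound between an arbitrary valid rationale $r$ and a selected rationale, bound the selected loss by $\max_{s\in R(x)}\ell(\theta;x,s)$, control the distance by the coverage radius $r(R(x))$, and take the supremum over $r$. The only difference is cosmetic. You take an infimum over $s\in R(x)$ instead of picking a nearest point $s_r$ as the paper does, which avoids having to assume that the minimum in the definition of $r(R(x))$ is attained.
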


\begin{proof}
Fix any $r\in\mathcal{R}(x,y^\star(x))$. By definition of $r(R(x))$, there exists $s_r\in R(x)$ such that
\[
\|\psi(r)-\psi(s_r)\|_2\le r(R(x)).
\]
By the Lipschitz assumption,
\[
\ell(\theta;x,r)
\le
\ell(\theta;x,s_r)
+
L\|\psi(r)-\psi(s_r)\|_2
\le
\max_{s\in R(x)}\ell(\theta;x,s)
+
Lr(R(x)).
\]
Taking the supremum over $r\in\mathcal{R}(x,y^\star(x))$ proves the claim.
\end{proof}

\begin{remark}
The bound contains both an empirical-fit term and a coverage term. A broader rationale subset may reduce the coverage radius, but it may also include harder rationales. Thus, Theorem~\ref{thm:rationale_coverage_bound} should be read as an explanatory bound, not as a guarantee that every diverse subset improves performance.
\end{remark}

\paragraph{Coverage proxies.}
Since $\mathcal{R}(x,y^\star(x))$ is unknown, we approximate it by the generated candidate pool $\mathcal{U}(x,y^\star(x))$. We use
\begin{align}
r_{\mathrm{avg}}(R(x))
&:=
\frac{1}{|\mathcal{U}(x,y^\star(x))|}
\sum_{u\in\mathcal{U}(x,y^\star(x))}
\min_{r\in R(x)}\|\psi(u)-\psi(r)\|_2,
\\
r_{\max}(R(x))
&:=
\max_{u\in\mathcal{U}(x,y^\star(x))}
\min_{r\in R(x)}\|\psi(u)-\psi(r)\|_2.
\end{align}
Smaller values indicate better coverage.

\subsection{Spectral Conditioning Diversity}
\label{app:spectral_diversity}

Coverage alone does not distinguish whether selected rationales spread across many directions or lie along a narrow semantic axis. Define the sample covariance
\[
\Sigma_{R(x)}
:=
\frac{1}{m}
\sum_{i=1}^m
(z_i-\bar z)(z_i-\bar z)^\top,
\qquad
z_i=\psi(r_i),
\qquad
\bar z=\frac{1}{m}\sum_{i=1}^m z_i.
\]

\begin{proposition}[Spectral diversity and ridge-stabilized local estimation]
\label{prop:rationale_spectral_conditioning}
Assume a local linearization of the fine-tuning objective around a reference parameter $\theta_0$, so that the rationale-conditioned prediction is approximated by
\[
f_\theta(x,r)
\approx
f_{\theta_0}(x,r)+h(x,r)^\top(\theta-\theta_0),
\]
where $h(x,r)\in\mathbb{R}^p$ is the local feature or Jacobian vector. Consider the ridge-regularized local objective over $R(x)=\{r_1,\dots,r_m\}$:
\[
\hat\theta
=
\arg\min_\theta
\sum_{i=1}^m
\bigl(y_i-h(x,r_i)^\top\theta\bigr)^2
+
\lambda\|\theta\|_2^2,
\qquad
\lambda>0.
\]
Let $H\in\mathbb{R}^{m\times p}$ be the design matrix with rows $h(x,r_i)^\top$. If
\[
y_i=h(x,r_i)^\top\theta^\star+\varepsilon_i,
\qquad
\mathbb{E}[\varepsilon_i]=0,
\qquad
\mathrm{Var}(\varepsilon_i)=\sigma^2,
\]
then the variance component of the ridge estimation error is controlled by
\[
\sigma^2
\mathrm{tr}\!\left((H^\top H+\lambda I)^{-2}H^\top H\right)
\le
\sigma^2
\mathrm{tr}\!\left((H^\top H+\lambda I)^{-1}\right).
\]
The full mean-squared error satisfies
\[
\mathbb{E}\|\hat\theta-\theta^\star\|_2^2
\le
\sigma^2
\mathrm{tr}\!\left((H^\top H+\lambda I)^{-1}\right)
+
\lambda^2\|\theta^\star\|_2^2
\left\|(H^\top H+\lambda I)^{-1}\right\|_2^2.
\]
Thus, rationale sets whose local design matrix has a richer and less collapsed spectrum yield a better-conditioned local estimation problem.
\end{proposition}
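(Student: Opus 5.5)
The plan is a direct bias--variance decomposition of the ridge estimator, followed by eigenvalue comparisons in an eigenbasis of \(H^\top H\). Write \(A:=H^\top H+\lambda I\), which is symmetric positive definite since \(\lambda>0\) and \(H^\top H\succeq 0\). The closed form is \(\hat\theta=A^{-1}H^\top y\). Substituting \(y=H\theta^\star+\varepsilon\) and using \(A^{-1}H^\top H=I-\lambda A^{-1}\) gives
\[
\hat\theta-\theta^\star=-\lambda A^{-1}\theta^\star + A^{-1}H^\top\varepsilon .
\]
Treating \(H\) as fixed, with \(\mathbb{E}[\varepsilon]=0\) and covariance \(\sigma^2 I\) (I will state explicitly that the \(\varepsilon_i\) are taken uncorrelated, which the variance formula implicitly requires), the cross term has zero mean. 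Hence
\[
\mathbb{E}\|\hat\theta-\theta^\star\|_2^2=\lambda^2\|A^{-1}\theta^\star\|_2^2+\sigma^2\,\mathrm{tr}\!\left(A^{-1}H^\top H A^{-1}\right).
\]
By cyclicity of the trace, and because \(A^{-1}\) commutes with \(H^\top H\), the second term equals \(\sigma^2\mathrm{tr}(A^{-2}H^\top H)\). This is exactly the variance component named in the statement.

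For the variance inequality, I will diagonalize \(H^\top H=Q\,\mathrm{diag}(\mu_1,\dots,\mu_p)Q^\top\) with \(\mu_j\ge 0\). In that basis,
\[
\mathrm{tr}(A^{-2}H^\top H)=\sum_j \frac{\mu_j}{(\mu_j+\lambda)^2}\le \sum_j\frac{1}{\mu_j+\lambda}=\mathrm{tr}(A^{-1}),
\]
using \(\mu_j\le \mu_j+\lambda\) termwise. For the bias term, I will bound \(\|A^{-1}\theta^\star\|_2\le \|A^{-1}\|_2\|\theta^\star\|_2\) by the operator-norm inequality, which gives \(\lambda^2\|\theta^\star\|_2^2\|A^{-1}\|_2^2\). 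Adding the two bounds yields the stated MSE bound.

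Finally, the interpretive sentence follows by reading both bounds in terms of the spectrum: \(\mathrm{tr}(A^{-1})=\sum_j(\mu_j+\lambda)^{-1}\) and \(\|A^{-1}\|_2=(\mu_{\min}+\lambda)^{-1}\) both decrease as the eigenvalues \(\mu_j\) grow. Consequently, a design matrix whose spectrum is spread across many directions, rather than collapsed onto a few, gives smaller bounds. No step is a real obstacle. The only points needing care are to keep \(H\) deterministic (or condition on it) so that the cross term vanishes, and to note the commutation of \(A^{-1}\) with \(H^\top H\) when simplifying the variance term.
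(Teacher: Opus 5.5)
Your proposal is correct and follows essentially the same route as the paper. Both use the same error decomposition $\hat\theta-\theta^\star=A^{-1}(H^\top\varepsilon-\lambda\theta^\star)$ with $A=H^\top H+\lambda I$, the same variance trace, and the same operator-norm bound on the bias. The one cosmetic difference is that you check the variance inequality eigenvalue by eigenvalue, while the paper uses the Loewner-order congruence $A^{-1}H^\top H A^{-1}\preceq A^{-1}$. You also make explicit two points the paper leaves implicit: the noise must be uncorrelated so that $\mathbb{E}[\varepsilon\varepsilon^\top]=\sigma^2 I_m$, and the cross term vanishes.
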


\begin{proof}
The ridge estimator satisfies
\[
\hat\theta-\theta^\star
=
(H^\top H+\lambda I)^{-1}(H^\top\varepsilon-\lambda\theta^\star),
\]
where $\varepsilon=(\varepsilon_1,\dots,\varepsilon_m)^\top$. Using $\mathbb{E}[\varepsilon\varepsilon^\top]=\sigma^2 I_m$, the variance term is
\[
\sigma^2
\mathrm{tr}\!\left((H^\top H+\lambda I)^{-2}H^\top H\right).
\]
Since $H^\top H\preceq H^\top H+\lambda I$,
\[
(H^\top H+\lambda I)^{-1}
H^\top H
(H^\top H+\lambda I)^{-1}
\preceq
(H^\top H+\lambda I)^{-1}.
\]
Taking traces gives the stated variance bound. The bias term is bounded by
\[
\lambda^2\theta^{\star\top}(H^\top H+\lambda I)^{-2}\theta^\star
\le
\lambda^2\|\theta^\star\|_2^2
\left\|(H^\top H+\lambda I)^{-1}\right\|_2^2.
\]
Combining the variance and bias terms proves the result.
\end{proof}

\begin{remark}
The proposition is stated in terms of local Jacobian directions $h(x,r)$, whereas our empirical proxies use external rationale embeddings $\psi(r)$. We therefore treat embedding covariance as a gradient-free proxy for diversity in local supervision directions.
\end{remark}

\paragraph{Spectral proxies.}
We use
\begin{align}
\mathrm{erank}(\Sigma_{R(x)})
&:=
\frac{\mathrm{tr}(\Sigma_{R(x)})}{\|\Sigma_{R(x)}\|_2},
\\
D_{\log\det}(R(x))
&:=
\log\det(I+\alpha\Sigma_{R(x)}),
\qquad \alpha>0.
\end{align}
Larger values indicate more multi-directional or volumetric spread. We also report
\begin{align}
\mathrm{PR}(\Sigma_{R(x)})
&:=
\frac{(\mathrm{tr}(\Sigma_{R(x)}))^2}
{\mathrm{tr}(\Sigma_{R(x)}^2)},
\\
A(R(x))
&:=
\frac{\lambda_{\max}(\Sigma_{R(x)})}
{\mathrm{tr}(\Sigma_{R(x)})}.
\end{align}

\subsection{Redundancy Reduction}
\label{app:redundancy_reduction}

A rationale family may still be redundant if many rationales are paraphrases of the same reasoning pattern. We model each rationale as a valid but noisy observation of the same latent supervision target.

\begin{proposition}[Variance reduction from low-redundancy supervision]
\label{prop:rationale_variance_reduction}
Suppose each rationale provides a noisy supervision signal
\[
z_r(x)=f^\star(x)+\varepsilon_r(x),
\qquad
\mathbb{E}[\varepsilon_r(x)\mid x]=0,
\qquad
\mathrm{Var}(\varepsilon_r(x)\mid x)=\sigma^2(x).
\]
Let $R(x)=\{r_1,\dots,r_m\}$ and define
\[
\bar z(x)=\frac{1}{m}\sum_{i=1}^m z_{r_i}(x).
\]
If the average pairwise noise correlation is
\[
\bar\rho(x)
:=
\frac{2}{m(m-1)}
\sum_{1\le i<j\le m}
\mathrm{Corr}(\varepsilon_{r_i}(x),\varepsilon_{r_j}(x)\mid x),
\]
then
\[
\mathrm{Var}(\bar z(x)\mid x)
=
\frac{\sigma^2(x)}{m}
\bigl(1+(m-1)\bar\rho(x)\bigr).
\]
Therefore, under a matched budget, a less redundant rationale subset can yield lower effective supervision variance than a highly redundant subset.
\end{proposition}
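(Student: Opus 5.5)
The plan is to expand the variance of the sample mean directly, using bilinearity of conditional covariance. Everything is conditional on $x$, so I fix $x$ throughout and drop it from the notation. Since $f^\star(x)$ is a constant given $x$, we have
\[
\bar z - f^\star = \frac{1}{m}\sum_{i=1}^m \varepsilon_{r_i}.
\]
Hence $\mathrm{Var}(\bar z\mid x)=\mathrm{Var}\bigl(\tfrac1m\sum_i\varepsilon_{r_i}\mid x\bigr)$. The mean-zero assumption is not even needed here; only the second moments enter.

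Next, I expand the variance as a double sum of covariances:
\[
\mathrm{Var}\Bigl(\sum_i\varepsilon_{r_i}\Bigm| x\Bigr)=\sum_i\mathrm{Var}(\varepsilon_{r_i}\mid x)+2\sum_{i<j}\mathrm{Cov}(\varepsilon_{r_i},\varepsilon_{r_j}\mid x).
\]
The diagonal terms contribute $m\sigma^2$. Because all the noises share the common variance $\sigma^2(x)$, each covariance equals $\sigma^2\,\mathrm{Corr}(\varepsilon_{r_i},\varepsilon_{r_j}\mid x)$.

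By the definition of $\bar\rho$, the off-diagonal sum is
\[
2\sigma^2\sum_{i<j}\mathrm{Corr}(\varepsilon_{r_i},\varepsilon_{r_j}\mid x)=\sigma^2\,m(m-1)\,\bar\rho.
\]
Dividing the total by $m^2$ gives
\[
\frac{\sigma^2}{m}\bigl(1+(m-1)\bar\rho\bigr),
\]
which is the stated identity. The case $m=1$ is trivial, since then there are no pairs and $\bar\rho$ is vacuous.

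The concluding sentence follows by comparing two subsets of the same size $m$ with the same $\sigma^2(x)$. The resulting variance is increasing in $\bar\rho$, so the subset with the smaller average noise correlation has the smaller variance.

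I expect no real obstacle; this is a routine calculation. The only points needing care are, first, noting explicitly that the equal-variance assumption is what allows covariances to be rewritten as $\sigma^2$ times correlations, and second, handling $\sigma^2(x)=0$. In that degenerate case the correlations are undefined, but both sides of the identity are zero, so it holds under any convention for $\bar\rho$.
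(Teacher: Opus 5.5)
Your proposal is correct and follows essentially the same route as the paper: expand the variance of the average into $m$ diagonal terms $\sigma^2(x)$ plus $2\sum_{i<j}$ covariance terms, rewrite each covariance as $\sigma^2(x)$ times the corresponding correlation using the common variance, and substitute the definition of $\bar\rho(x)$. Your extra remarks are correct refinements that the paper leaves implicit: the degenerate cases $m=1$ and $\sigma^2(x)=0$, and the observation that for fixed $m$ and $\sigma^2(x)$ the variance increases in $\bar\rho(x)$, which justifies the concluding sentence.
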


\begin{proof}
Using the variance of an average,
\[
\mathrm{Var}(\bar z(x)\mid x)
=
\frac{1}{m^2}
\sum_{i=1}^m
\mathrm{Var}(z_{r_i}(x)\mid x)
+
\frac{2}{m^2}
\sum_{1\le i<j\le m}
\mathrm{Cov}(z_{r_i}(x),z_{r_j}(x)\mid x).
\]
By assumption, each variance term is $\sigma^2(x)$, and each covariance term equals
\[
\mathrm{Corr}(\varepsilon_{r_i}(x),\varepsilon_{r_j}(x)\mid x)\sigma^2(x).
\]
Substituting and simplifying gives
\[
\mathrm{Var}(\bar z(x)\mid x)
=
\frac{\sigma^2(x)}{m}
\bigl(1+(m-1)\bar\rho(x)\bigr).
\]
\end{proof}

\begin{remark}
In visual persuasion, different rationales may focus on global composition, local visual elements, or supporting and weakening evidence. Such rationales can share the same target label while relying on different evidence channels. This makes lower redundancy a plausible source of more stable supervision.
\end{remark}

\paragraph{Redundancy proxies.}
We use
\begin{align}
D_{\mathrm{pair}}(R(x))
&:=
\frac{2}{m(m-1)}
\sum_{1\le i<j\le m}
\|\psi(r_i)-\psi(r_j)\|_2^2,
\\
\mathrm{Sim}_{\mathrm{avg}}(R(x))
&:=
\frac{2}{m(m-1)}
\sum_{1\le i<j\le m}
\cos(\psi(r_i),\psi(r_j)).
\end{align}
Larger $D_{\mathrm{pair}}$ and smaller $\mathrm{Sim}_{\mathrm{avg}}$ indicate lower redundancy.

\subsection{Empirical Diagnostics}
\label{app:empirical_rationale_diversity}

We now provide formula-based diagnostics of the rationale-source family. These diagnostics are intended to check whether the generated sources are distinguishable and whether the proxy structure is reasonably stable across embedding backends.

\paragraph{Proxy summary.}
Table~\ref{tab:rationale_proxy_meanings} summarizes the proxies.

\begin{table}[t]
\centering
\small
\caption{Rationale-source diversity proxies used in the empirical analysis.}
\label{tab:rationale_proxy_meanings}
\begin{tabular}{lll}
\toprule
\textbf{Proxy} & \textbf{Meaning} & \textbf{Preferred direction} \\
\midrule
$\mathrm{Sim}_{\mathrm{avg}}$ & average pairwise cosine similarity & smaller = less redundant \\
$\mathrm{cos\_dist}_{\mathrm{avg}}$ & average pairwise cosine distance & larger = less redundant \\
$\mathrm{erank}$ & effective rank of covariance & larger = more multi-directional \\
$\mathrm{logdet}$ & volumetric spread of covariance & larger = more volumetric \\
$\mathrm{anisotropy}$ & directional concentration & smaller = more isotropic \\
$r_{\mathrm{avg}}$ & average coverage radius & smaller = better coverage \\
$r_{\max}$ & worst-case coverage radius & smaller = better coverage \\
$\mathrm{near\mbox{-}dup}$ & near-duplicate rate & smaller = less redundant \\
\bottomrule
\end{tabular}
\end{table}

\paragraph{Source distinctness.}
For each input $x$, let $r_k(x)$ be the rationale from source $k$, and let $e_k(x)=\psi(r_k(x))$. We remove input-specific effects by
\[
\tilde e_k(x)
=
e_k(x)
-
\frac{1}{K_x}
\sum_{j=1}^{K_x} e_j(x),
\]
where $K_x$ is the number of available sources for $x$. We then run PERMANOVA on $\tilde e_k(x)$ with restricted permutations within each input block. Table~\ref{tab:permanova_global_summary} reports the results.

\begin{table}[t]
\centering
\small
\caption{Global PERMANOVA results after input-wise mean removal and restricted within-input permutations. In all cases, $p=0.005$ under 199 permutations. Here, $N$ is the number of residual samples, and $K$ is the number of co-occurring rationale sources included in the test.}
\label{tab:permanova_global_summary}
\begin{tabular}{llrrrr}
\toprule
\textbf{Embedding} & \textbf{Generator} & $N$ & $K$ & $F$ & $R^2$ \\
\midrule
E5 & Qwen & 5670 & 7 & 1235.45 & 0.5669 \\
E5 & Phi  & 5738 & 7 & 1216.27 & 0.5601 \\
\midrule
BGE-M3 & Qwen & 5670 & 7 & 1089.71 & 0.5359 \\
BGE-M3 & Phi  & 5738 & 7 & 1004.06 & 0.5125 \\
\midrule
MXBAI & Qwen & 5670 & 7 & 644.29 & 0.4057 \\
MXBAI & Phi  & 5738 & 7 & 560.99 & 0.3700 \\
\bottomrule
\end{tabular}
\end{table}

The results suggest that prompt-source identity explains a substantial portion of residual embedding variation after controlling for input identity.

\paragraph{Proxy relationships.}
For two image-level proxy functions $M_1$ and $M_2$, define
\[
\rho(M_1,M_2)
=
\mathrm{Spearman}
\Bigl(
\{M_1(R(x))\}_{x\in\mathcal{D}},
\{M_2(R(x))\}_{x\in\mathcal{D}}
\Bigr).
\]
Under BGE-M3 embeddings,
\[
\rho(\mathrm{erank}, r_{\mathrm{avg}})
\approx
-0.006
\quad\text{on Phi},
\qquad
\rho(\mathrm{erank}, r_{\mathrm{avg}})
\approx
-0.034
\quad\text{on Qwen},
\]
while
\[
\rho(\mathrm{erank}, \mathrm{Sim}_{\mathrm{avg}})
\approx
0.303
\quad\text{on Phi},
\qquad
\rho(\mathrm{erank}, \mathrm{Sim}_{\mathrm{avg}})
\approx
0.166
\quad\text{on Qwen}.
\]
These weak-to-moderate correlations suggest that coverage, spectral spread, and redundancy are related but non-equivalent aspects of rationale diversity.

\paragraph{Pairwise source structure.}
For each source pair $(k,\ell)$, define
\[
C_{k\ell}
=
\frac{1}{|\mathcal{D}_{k\ell}|}
\sum_{x\in\mathcal{D}_{k\ell}}
\cos\!\left(\psi(r_k(x)),\psi(r_\ell(x))\right),
\qquad
D_{k\ell}=1-C_{k\ell}.
\]
We also compute the near-duplicate rate
\[
N_{k\ell}^{(\tau)}
=
\frac{1}{|\mathcal{D}_{k\ell}|}
\sum_{x\in\mathcal{D}_{k\ell}}
\mathbf{1}
\left[
\cos\!\left(\psi(r_k(x)),\psi(r_\ell(x))\right)\ge \tau
\right],
\]
with $\tau=0.95$. These quantities define source-pair matrices
\[
C=(C_{k\ell})_{k,\ell=1}^K,
\qquad
D=(D_{k\ell})_{k,\ell=1}^K,
\qquad
N^{(\tau)}=(N_{k\ell}^{(\tau)})_{k,\ell=1}^K.
\]
For example, in the Qwen family, \texttt{long} and \texttt{short} are close under multiple embeddings, while \texttt{balanced} and \texttt{visual\_element} are farther apart. This indicates a mixture of partially redundant and more separated rationale sources.

\paragraph{Robustness across embedding backends.}
For two embedding backends $a$ and $b$, and a proxy $M$, we compute
\[
\rho_{a,b}(M)
=
\mathrm{Spearman}
\Bigl(
\{M_a(R(x))\}_{x\in\mathcal{D}},
\{M_b(R(x))\}_{x\in\mathcal{D}}
\Bigr).
\]
For pairwise matrices, we compute Spearman correlation over upper-triangular entries:
\[
\rho_{a,b}^{\mathrm{pair}}(C)
=
\mathrm{Spearman}
\Bigl(
\{C^{(a)}_{k\ell}:1\le k<\ell\le K\},
\{C^{(b)}_{k\ell}:1\le k<\ell\le K\}
\Bigr).
\]
Table~\ref{tab:proxy_backend_spearman} reports the results.

\begin{table}[t]
\centering
\small
\caption{Embedding-backend robustness check for $\psi$-based proxies using Spearman correlations. Per-image correlations are computed over the common $n=820$ inputs; pairwise matrix correlations are computed over the upper-triangular entries of the source-pair matrices.}
\label{tab:proxy_backend_spearman}
\begin{tabular}{llrrr}
\toprule
\textbf{Split} & \textbf{Quantity} & \(\rho\)(E5,BGE) & \(\rho\)(E5,MXBAI) & \(\rho\)(BGE,MXBAI) \\
\midrule
Qwen & per-image $\mathrm{sim}_{\mathrm{avg}}$ & 0.602 & 0.745 & 0.755 \\
Qwen & per-image $\mathrm{logdet}$ & 0.603 & 0.745 & 0.755 \\
Qwen & per-image $\mathrm{erank}$ & 0.540 & 0.583 & 0.527 \\
Qwen & per-image $\mathrm{anisotropy}$ & 0.540 & 0.583 & 0.527 \\
Qwen & pairwise $\mathrm{cos\_mean}$ matrix & 0.809 & 0.955 & 0.806 \\
Qwen & pairwise near-duplicate rate matrix & 0.506 & 0.891 & 0.518 \\
\midrule
Phi & per-image $\mathrm{sim}_{\mathrm{avg}}$ & 0.609 & 0.727 & 0.813 \\
Phi & per-image $\mathrm{logdet}$ & 0.609 & 0.728 & 0.813 \\
Phi & per-image $\mathrm{erank}$ & 0.736 & 0.691 & 0.769 \\
Phi & per-image $\mathrm{anisotropy}$ & 0.736 & 0.691 & 0.769 \\
Phi & pairwise $\mathrm{cos\_mean}$ matrix & 0.747 & 0.892 & 0.831 \\
Phi & pairwise near-duplicate rate matrix & 0.774 & 0.890 & 0.747 \\
\bottomrule
\end{tabular}
\end{table}

The pairwise cosine-mean matrices show moderate-to-strong agreement across embedding backends. Per-image spectral proxies are more backend-sensitive, especially for Qwen, but remain positively correlated. Near-duplicate rate is less stable in some cases because it is thresholded at $\tau=0.95$.

\paragraph{Budgeted coverage comparison.}
For each input $x$, the random baseline samples $B$ sources uniformly from the available source family. Coverage is computed against the common candidate pool of all available rationales for the same input. Table~\ref{tab:rand_by_B_scaling} reports coverage as a function of $B$.

\begin{table*}[t]
\centering
\small
\caption{Random-baseline coverage as a function of budget $B$, averaged over 10 draws per input. Coverage is computed against the common candidate pool of all available Phi and Qwen rationale sources for the same input. Smaller values indicate better coverage.}
\label{tab:rand_by_B_scaling}
\resizebox{\textwidth}{!}{%
\begin{tabular}{llccccc|ccccc}
\toprule
\textbf{Embedding} & \textbf{Split} &
\multicolumn{5}{c|}{$r_{\mathrm{avg}}\downarrow$ for \texttt{rand}$(B)$} &
\multicolumn{5}{c}{$r_{\max}\downarrow$ for \texttt{rand}$(B)$} \\
\cmidrule(lr){3-7}\cmidrule(lr){8-12}
 &  & $B{=}1$ & $B{=}2$ & $B{=}3$ & $B{=}4$ & $B{=}5$ & $B{=}1$ & $B{=}2$ & $B{=}3$ & $B{=}4$ & $B{=}5$ \\
\midrule
BGE-M3 & Phi  & 0.0976 & 0.0730 & 0.0605 & 0.0512 & 0.0436 & 0.1669 & 0.1416 & 0.1294 & 0.1209 & 0.1144 \\
BGE-M3 & Qwen & 0.0934 & 0.0727 & 0.0605 & 0.0515 & 0.0441 & 0.1657 & 0.1485 & 0.1402 & 0.1345 & 0.1299 \\
\addlinespace
E5 & Phi  & 0.0616 & 0.0460 & 0.0378 & 0.0318 & 0.0269 & 0.1083 & 0.0924 & 0.0849 & 0.0803 & 0.0769 \\
E5 & Qwen & 0.0596 & 0.0451 & 0.0373 & 0.0314 & 0.0267 & 0.1041 & 0.0907 & 0.0843 & 0.0801 & 0.0767 \\
\addlinespace
MXBAI & Phi  & 0.1138 & 0.0839 & 0.0692 & 0.0587 & 0.0503 & 0.2119 & 0.1769 & 0.1613 & 0.1515 & 0.1444 \\
MXBAI & Qwen & 0.1101 & 0.0824 & 0.0683 & 0.0584 & 0.0503 & 0.2047 & 0.1727 & 0.1591 & 0.1504 & 0.1444 \\
\bottomrule
\end{tabular}%
}
\end{table*}

Increasing $B$ consistently reduces both $r_{\mathrm{avg}}$ and $r_{\max}$ across embedding backends and generator splits. This is consistent with the coverage view: sampling from more sources gives broader access to the generated valid-rationale pool.

\paragraph{Takeaway.}
Overall, the analysis suggests that the prompt-specific rationale sources are not simply interchangeable. They show distinguishable residual embedding structure, non-identical proxy behavior, and a mixture of redundant and complementary source pairs. These observations are consistent with the view that reasoning-perspective diversity can provide a richer supervision signal under a matched training budget.

\section{Details on Rationale Faithfulness Evaluation}
\label{app:detail-rationale}

\subsection{Ground Truth Dataset Construction}
To determine the evaluation methodology for \emph{consistency} and \emph{groundedness}, we sample 700 instances from our rationale training datasets. To establish ground-truth labels, three of the authors independently labeled each instance. The annotations exhibited near-perfect agreement on \emph{consistency} (Fleiss' $\kappa = 0.9962$, $99.7\%$ complete agreement) and substantial agreement on \emph{groundedness} (Fleiss' $\kappa = 0.6421$, $88.9\%$ complete agreement)~\citep{landis1977measurement}. Given this level of agreement, we determine the ground-truth labels via majority vote among the three annotators. We then split the 700 instances into a validation set of 490 instances for selecting the evaluation methodology and a test set of 210 instances for assessing its performance.

\subsection{Comparison of Candidate Evaluation Methods}
To automate the evaluation of consistency and groundedness, we assessed several candidate models and prompting strategies on the previously mentioned test set of 210 instances. Our goal was to select the method that achieved the highest alignment with the human annotations. Alignment is primarily measured using Cohen's $\kappa$, alongside Balanced Accuracy and $F_1$ score.

\paragraph{Consistency Candidates.} For consistency, the task requires determining whether the persuasiveness label naturally follows from the provided rationale text. To select our evaluator, we compared an LLM-as-a-judge approach using GPT-5, Qwen2.5-VL-32B-Instruct, and Qwen2.5-VL-7B-Instruct against a dedicated Natural Language Inference (NLI) model (\texttt{nli-deberta-v3-base}) on the 490-instance validation set. 

\begin{table}[h]
  \centering
  \caption{Comparison of consistency evaluation methods (validation set).}
  \label{tab:consistency_candidates_val}
  \begin{tabular}{lccc}
    \toprule
    \textbf{Model} & \textbf{Cohen's $\kappa$} & \textbf{Balanced Acc.} & \textbf{F1 Score} \\
    \midrule
    \textbf{GPT-5} & \textbf{0.992} & \textbf{0.996} & \textbf{0.996} \\
    Qwen3-32B & 0.914 & 0.958 & 0.958 \\
    Qwen3-8B & 0.973 & 0.986 & 0.986 \\
    NLI (DeBERTa) & 0.953 & 0.978 & 0.977 \\
    \bottomrule
  \end{tabular}
\end{table}

As shown in Table~\ref{tab:consistency_candidates_val}, while the NLI baseline and Qwen models performed well, GPT-5 achieved a near-perfect alignment with human annotators ($\kappa = 0.992$). Therefore, we selected GPT-5 as the primary LLM judge for evaluating consistency.

\begin{table}[h] 
    \centering 
    \caption{Comparison of consistency evaluation methods (test set).} 
    \label{tab:consistency_candidates} 
    \begin{tabular}{lccc} 
        \toprule 
        \textbf{Model} & \textbf{Cohen's $\kappa$} & \textbf{Balanced Acc.} & \textbf{F1 Score} \\ 
        \midrule 
        NLI (DeBERTa-v3-base) & 0.9523 & 0.9757 & 0.9772 \\ 
        \textbf{GPT-5} & \textbf{0.9905} & \textbf{0.9951} & \textbf{0.9953} \\ 
        \bottomrule 
    \end{tabular} 
\end{table}

As presented in Table~\ref{tab:consistency_candidates}, GPT-5 maintains its near-perfect alignment on the test set, achieving a Cohen's $\kappa$ of 0.9905 and an F1 score of 0.9953. The consistency of these metrics across both the validation and test splits demonstrates that the zero-shot LLM-as-a-judge approach is highly robust for assessing consistency, outperforming the NLI baseline.

\paragraph{Groundedness Candidates.} Evaluating groundedness in the context of visual persuasion is inherently challenging. Unlike factual image captioning, persuasive rationales often mix objective visual elements with subjective interpretations like emotional tone or symbolism. To capture such nuance, we evaluated five distinct evaluation methods against not only the human majority vote but also the individual judgments of the three annotators. Importantly, all metric calibrations were conducted exclusively on the 490-instance validation dataset. The final evaluation was then performed on the 210-instance test set.

The candidate methods included: 
\begin{enumerate} 
    \item \textbf{CLIP with text:} Measuring CLIP similarity between the rationale text and the image description provided in the PVP dataset. 
    \item \textbf{CLIP with image:}  Measuring CLIP similarity between the rationale text and the corresponding image. 
    \item \textbf{GPT-5 atomic facts:} Decomposing the rationale into discrete atomic facts and prompting GPT-5 to verify each strictly against the image. 
    \item \textbf{GPT-5 atomic facts (calibrated):} The same atomic facts approach, but explicitly calibrated using the validation set to select a threshold for ratio ($N_{yes} / N_{total}$) 
    \item \textbf{GPT-5 prompting:} A direct, zero-shot prompt asking GPT-5 to act as a judge of the rationale's groundedness in the provided image. 
\end{enumerate}

\begin{table}[h]
  \centering
  \caption{Comparison of groundedness evaluation methods (test set).}
  \label{tab:groundedness_candidates}
  \resizebox{\textwidth}{!}{%
  \begin{tabular}{lccccccc}
    \toprule
    \textbf{Method} & \textbf{$\kappa$ (vs Majority)} & \textbf{$\kappa$ (vs U1)} & \textbf{$\kappa$ (vs U2)} & \textbf{$\kappa$ (vs U3)} & \textbf{Bal. Acc.} & \textbf{F1 (Yes)} & \textbf{F1 (No)} \\
    \midrule
    CLIP with image (> 0.3) & -0.0016 & 0.0647 & 0.0349 & -0.0192 & 0.4983 & 0.7414 & 0.1616 \\
    CLIP with text (> 0.6) & 0.0406 & 0.0426 & 0.0316 & 0.0516 & 0.5672 & 0.5211 & 0.2138 \\
    GPT-5 atomic facts & 0.1001 & 0.1424 & 0.1114 & 0.1141 & \textbf{0.6526} & 0.6058 & \textbf{0.2603} \\
    GPT-5 atomic facts (> 0.6) & 0.1252 & 0.1351 & 0.0977 & \textbf{0.1647} & 0.5563 & \textbf{0.9186} & 0.2051 \\
    \textbf{GPT-5 prompting} & \textbf{0.1451} & \textbf{0.1935} & \textbf{0.2105} & 0.1294 & 0.5711 & 0.9125 & 0.2326 \\
    \bottomrule
  \end{tabular}%
  }
\end{table}

Evaluating a model's ability to recognize ungrounded rationales, measured by the F1 Score (No), is arguably the most critical aspect of this metric, as it indicates the model's sensitivity to hallucinations. However, an aggressive rejection strategy can artificially inflate this score at the cost of rejecting valid interpretations.

The results demonstrate that traditional vision-language embedding distances (CLIP) perform poorly, yielding near-zero correlation with human judgments and exhibiting extreme imbalances between positive and negative class identification.

Decomposing the rationales into uncalibrated atomic facts achieved the highest $F_1(no)$ Score of $0.2603$ and balanced accuracy of $0.6526$. However, this method suffered a severe drop in the positive class $F_1(yes)$ Score of $0.6058$. This indicates that strict factual decomposition over-penalizes subjective interpretations, generating excessive false negatives by rejecting valid persuasive reasoning. When we explicitly calibrated the atomic facts prompt to account for this subjectivity, the $F_1(yes)$ Score recovered dramatically, but the ability to detect ungrounded rationales degraded, with the $F_1(no)$ Score dropping to $0.2051$.

Ultimately, \textbf{GPT-5 prompting} provided the most optimal trade-off and robust overall performance. By assessing the rationale holistically rather than breaking it into isolated components, it achieved a strong $F_1(no)$ Score of $0.2326$ while maintaining a high $F_1(yes)$ Score of $0.9125$. This balanced detection capability resulted in the highest overall agreement with the aggregated majority vote ($\kappa = 0.1451$) and the strongest alignment with individual human annotators ($\kappa$ of $0.2105$ with User 2 and $0.1935$ with User 1). Consequently, GPT-5 prompting was selected as the standard groundedness metric for our evaluation pipeline.

\subsection{Prompt Template for Consistency}
\begin{promptbox}
Given a reasoning text, infer whether it supports the conclusion that the image is persuasive.

### Instructions
- Use only the provided reasoning text.
- Do not rely on external knowledge or unstated assumptions.

### Output
Answer with exactly one word:
- yes
- no
\end{promptbox}

\subsection{Prompt Template for Groundedness}
\begin{promptbox}
You are a QA auditor evaluating the "Visual Groundedness" of a reasoning text against an image. 

You will be given:
1. An image
2. A reasoning text about the image

Your objective is to perform a Groundedness Evaluation. You must determine if the concrete visual claims made in the reasoning text physically exist in the provided image.

### Evaluation Rules:
1. Evaluate Concrete Visual Claims ONLY: Mentally break down the reasoning text into physical visual claims (objects, characters, gestures, backgrounds, explicit text). DO NOT evaluate subjective evaluations, viewer impacts, or rhetorical conclusions (e.g., ignore statements like "The design lacks clarity", "This is unpersuasive", or "It compels the viewer").
2. "Message" vs. "Physical Text": If the reasoning says "The image conveys the message 'Do not smoke'", do NOT look for a physical sign saying "Do not smoke". Treat it as a thematic statement. ONLY strictly verify text if the reasoning explicitly claims it is written on a physical object (e.g., "The sign reads 'X'").
3. Lenient Text Matching for AI Images: AI-generated images often contain mangled text. If the reasoning quotes text and the image contains text that is clearly attempting to spell that phrase (e.g., "EMERGENCY XIT", "Mid-d'y"), you MUST consider the claim GROUNDED. Do not penalize minor typos, missing letters, or strange characters.
4. Contextual Synonyms & Reasonable Assumptions: Use reasonable human logic. Accept "student" for someone studying. If a claim mentions "empty bottles" and the bottles are opaque, assume they are empty based on context. Be lenient with quantity descriptors like "filled," "abundance," or "scattered."
5. STRICT Relational and State Accuracy: You must verify the specific *state*, *adjectives*, and *relationships* of objects, not just their presence. If the text claims a "stark contrast between a dirty hood and the rest of the kitchen," and the rest of the kitchen is ALSO dirty, the claim is false. If it claims chains are "broken" but they are intact, or "scattered money" but they are just blank papers, the claim is false.
6. Zero Tolerance for Physical Hallucinations: If the reasoning describes EVEN ONE concrete physical element, object, specific state, or character that is completely NOT visible in the image, the final label MUST be "No". If all concrete physical claims and their described states are perfectly visible, the label MUST be "Yes".

### Output Format:
Return a JSON object ONLY. Do not include markdown formatting or explanations outside the JSON.

{{
"label": "Yes" or "No"
}}

### Input:
- Reasoning text: 
{reason_text}
\end{promptbox}

\subsection{Prompt Template for Image Editing}

\begin{promptbox}
You are given a message, an image based on that message, and an explanation describing why the image is persuasive or unpersuasive.

Your task is to identify the single visual element in the explanation that should be edited so that the image conveys the opposite judgment.

### Definition
Visual element is the single visual factor in the image that is most causally responsible for the conclusion described in the explanation.
It is the fundamental visual element in the image that drives the conclusion, rather than a secondary detail, descriptive attribute, or a strengthened version of an existing element.

### Steps
1. Read the image and the explanation carefully.
2. Identify the visual element that is most decisive for the explanation's conclusion.
3. Generate a concise image editing prompt that changes this element so that the image moves against the explanation.

### Rules
1. Remove
- If the visual element is a concrete object, person, or other visible entity that already appears in the image, remove it.
2. Modify
- If the visual element is atmosphere, setting, action, or emotion, modify it in the opposite visual direction.
- Select Modify only if removing the element would make the image unnatural.
3. Add
- If the visual element is the absence of something, add that missing element naturally.

### Rules
- Focus on one clear, editable visual target.
- The edit should be specific, visible, and directly reversible with respect to the explanation.
- The final prompt should be simple and precise.

### Remove Example
- Visual element: police cars and crime scene tape
- Good example: Remove the police cars and crime scene tape.

### Modify Example
- Visual element: excited people
- Good example: Modify the facial expressions of the people so that they look sad.
- Bad example: Modify the facial expressions of the people so that they look neutral.

### Modify Example
- Visual element: vibrant colors
- Good example: Modify the colors of the table and wall to make them dark and muted.
- Bad example: Modify the vibrant colors to be dull and less vivid.

### Contrast Example
- Visual element: the visual contrast between the solitary figure and the lively group
- Good example: Remove the lively group of people and fill the empty space naturally.
- Bad example: Reduce the visual contrast between the solitary figure and the lively group scene to make them appear more similar.

### Add Example
- Visual element: lack of books
- Good example: Add books on the desk.
- Bad example: Remove the books.

### Output
Return JSON:
{{
    "editable_target": "...",
    "edit_operation": "Add | Remove | Modify",
    "desired_change": "...",
    "final_prompt": "One simple sentence"
}}

### Input
- Message: {}
- Image: the input image
- Explanation: {}
\end{promptbox}

\subsection{Human Validation for Image Editing Pipeline}
\label{app:human_eval_1}
Figure~\ref{fig:faithfulness} shows the annotation interface presented to annotators. The three annotators were volunteer participants who provided informed consent and were compensated at the minimum wage of the country where the annotation was conducted. The annotation task involved viewing persuasive advertisement images and providing binary judgments, posing no more than minimal risk to participants. No personally identifiable information was collected or retained. Participants were informed of the persuasive nature of the stimulus images prior to participation. This study is exempt from IRB approval under 45 CFR 46.104(d)(2)(i).

\begin{figure}
    \centering
    \includegraphics[width=1.0\linewidth]{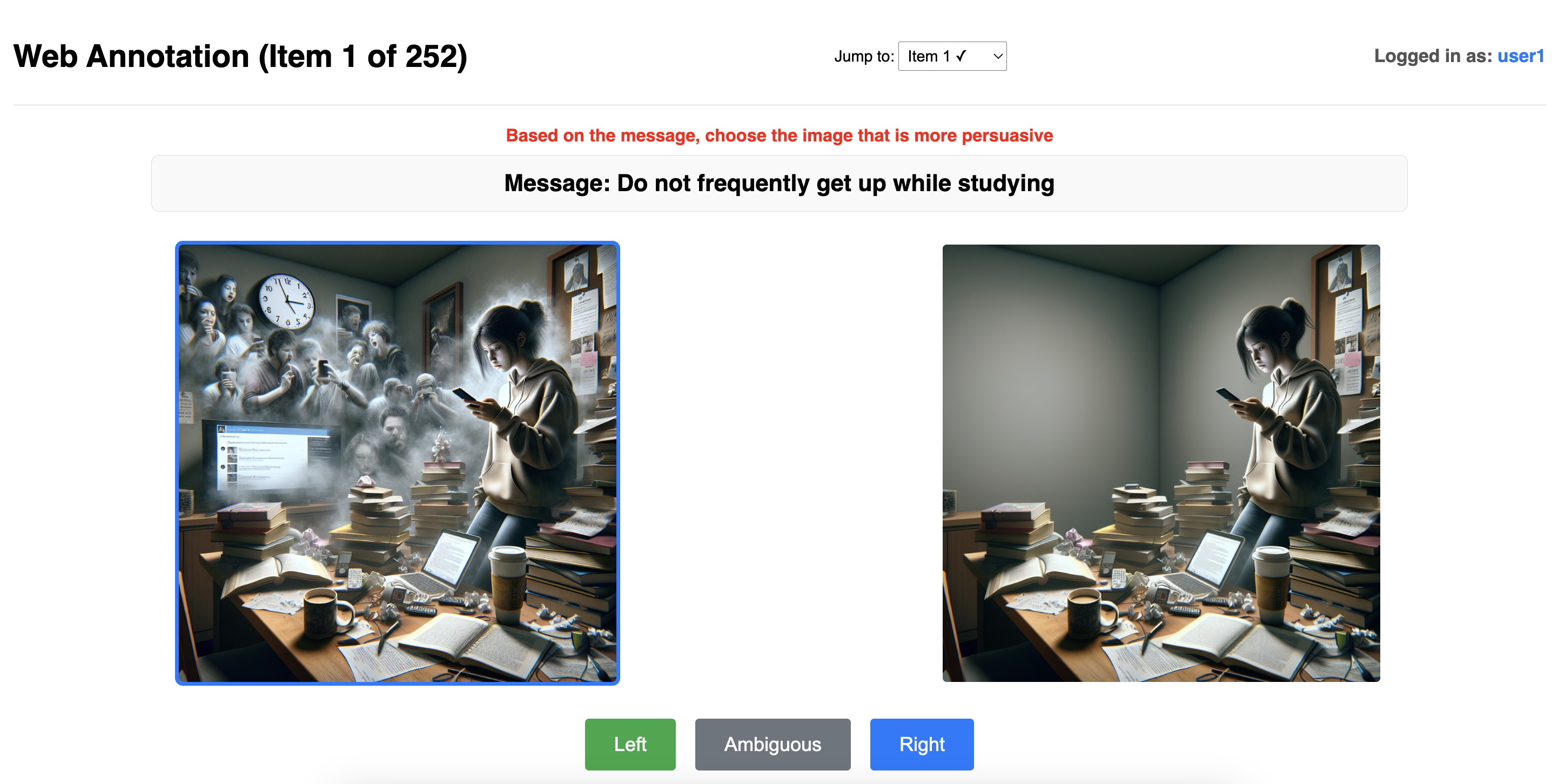}
    \caption{The annotation interface used for validating image editing pipeline.}
    \label{fig:faithfulness}
\end{figure}

\subsection{Human Evaluation for Rationale Preference}
\label{app:human_eval_2}
Figure~\ref{fig:rationale_preference} shows the annotation interface used for evaluation. The three annotators were volunteer participants who provided informed consent and were compensated at the minimum wage of the country where the annotation was conducted. The annotation task involved reading and comparing model-generated rationales about image persuasiveness, posing no more than minimal risk to participants. No personally identifiable information was collected or retained. Participants were informed of the nature of the task prior to participation. This study is exempt from IRB approval under 45 CFR 46.104(d)(2)(i).

\begin{figure}
    \centering
    \includegraphics[width=0.8\linewidth]{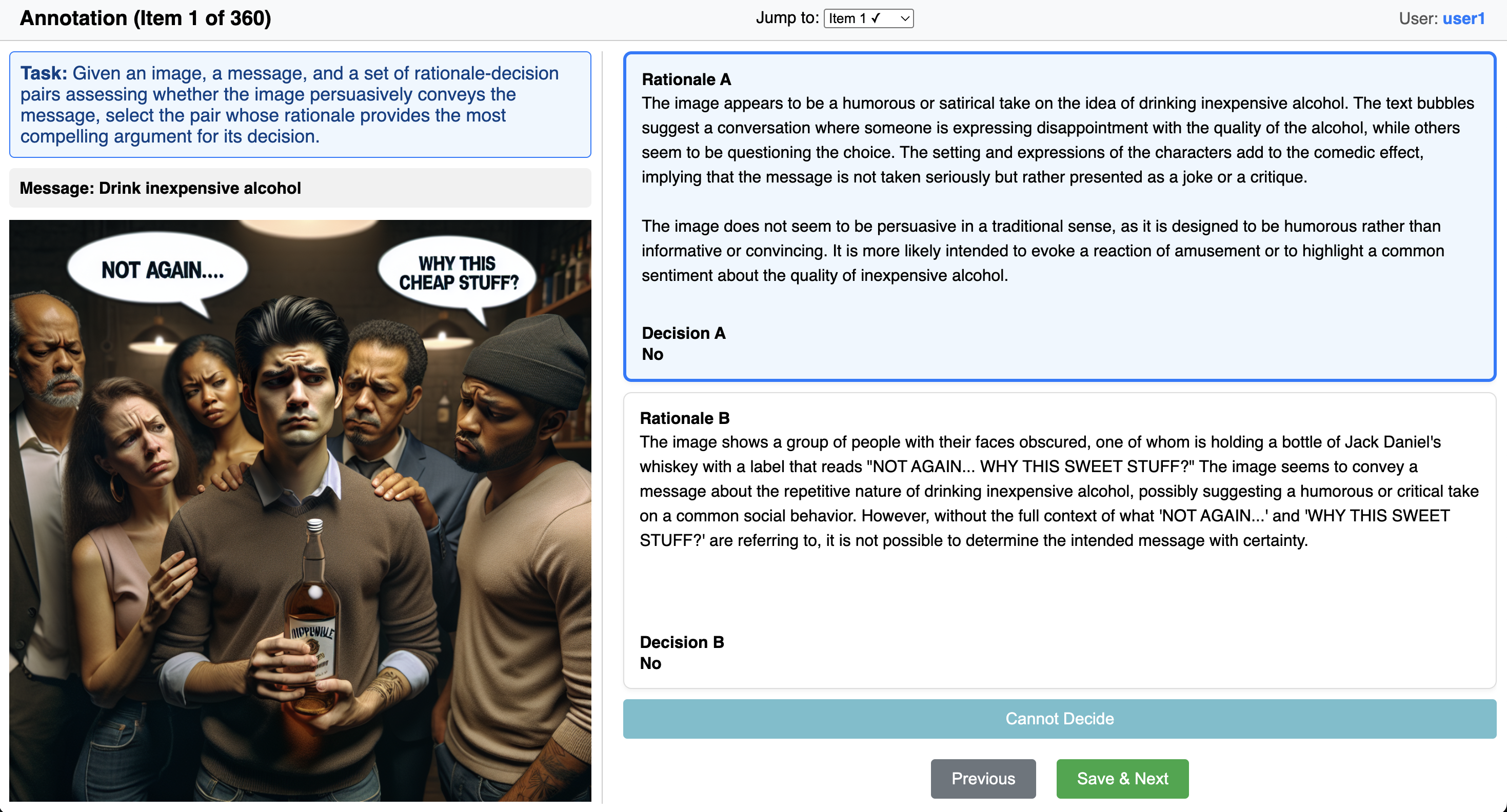}
    \caption{The annotation interface used for evaluating human preference of generated rationales.}
    \label{fig:rationale_preference}
\end{figure}

\newpage

\section{Comparison Between Faithfulness Metrics and Human Preference}
\label{app:human}

To examine whether automatic faithfulness metrics align with human judgments, we visualize the relationship between human rationale preference and three faithfulness metrics—Consistency, Groundedness, and Sensitivity—across six model variants (Qwen and Phi families, each with Base, SFT, and GRPO). The results are summarized in Figure~\ref{fig:faithfulness_vs_human}.

Figure~\ref{fig:faithfulness_vs_human}(A) shows the per-model faithfulness scores alongside human win rates, revealing that faithfulness rankings do not consistently track human preference. Figure~\ref{fig:faithfulness_vs_human}(B) reports the pairwise prediction accuracy of each metric against human preference: Sensitivity achieves the highest accuracy (62.1\%), followed by Groundedness (56.3\%), while Consistency (47.4\%) performs near or below the random baseline.

Figures~\ref{fig:faithfulness_vs_human}(C--E) further examine these relationships through pairwise correlations between metric differences ($\Delta$Metric $= A - B$) and human preference for model $A$. Among the three metrics, only Sensitivity exhibits a significant positive correlation with human preference ($r=0.77$, $\rho=0.61$, $p=0.016$), whereas Consistency ($r=-0.03$) and Groundedness ($r=-0.17$) show weak or negative correlations. These results suggest that, of the faithfulness metrics considered, Sensitivity is the most predictive of human preference, while Consistency and Groundedness alone are insufficient proxies for human-perceived rationale quality.

\begin{figure}
    \centering
    \includegraphics[width=1.0\linewidth]{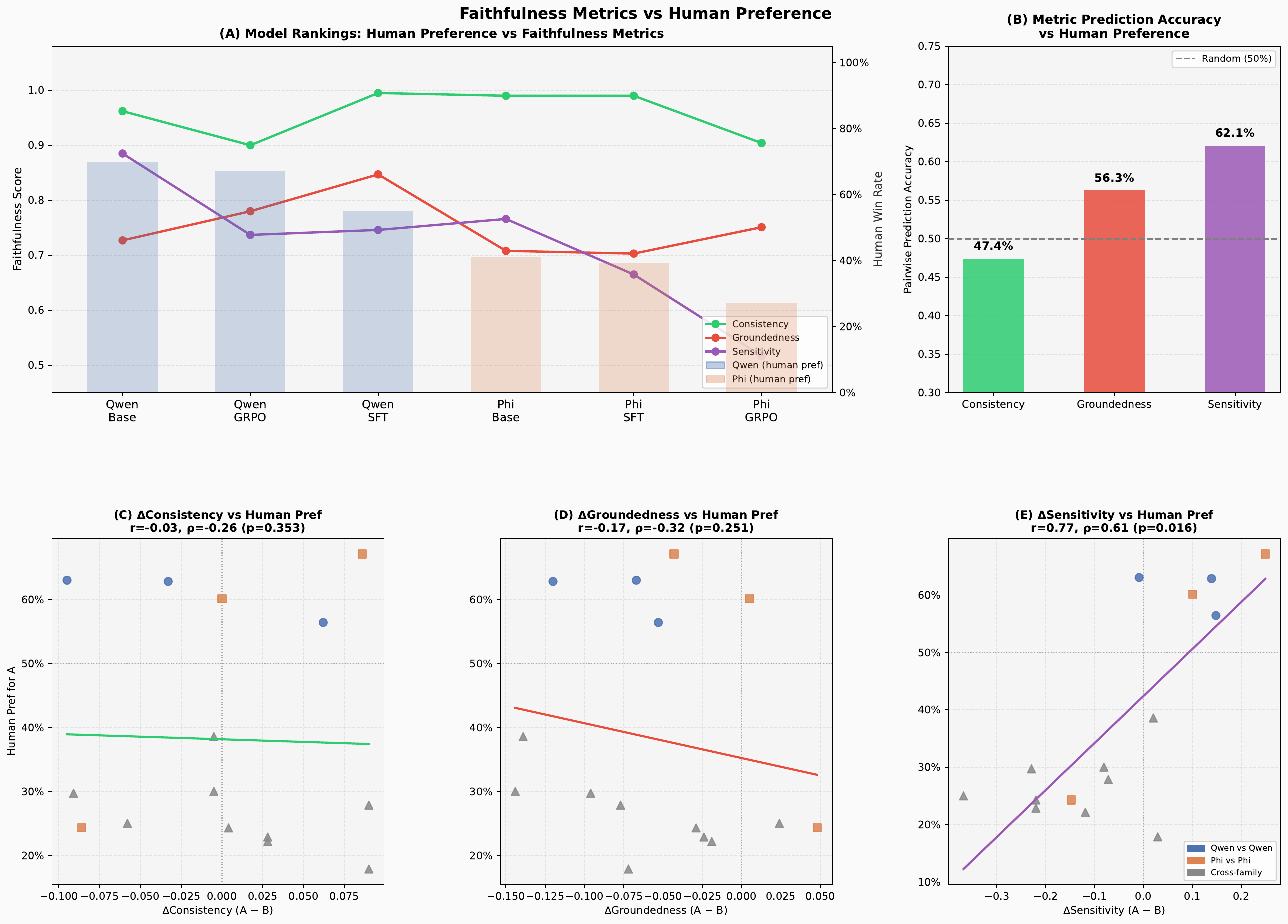}
    \caption{Comparison between faithfulness metrics and human preference. 
    \textbf{(A)} Rankings of six model variants by faithfulness scores (lines) and human win rate (bars). 
    \textbf{(B)} Pairwise prediction accuracy of each metric relative to human preference, with Sensitivity performing best (62.1\%). 
    \textbf{(C--E)} Correlations between per-metric differences and human preference across model pairs; only Sensitivity shows a significant positive correlation ($r=0.77$, $p=0.016$).}
    \label{fig:faithfulness_vs_human}
\end{figure}

\section{Licenses of Existing Assets}
\label{app:license}
The licenses of the datasets, models, and libraries used in our experiments are summarized in Table~\ref{tab:license}.

\begin{table}[h]
\centering
\caption{Licenses of existing assets used in our experiments.}
\label{tab:license}
\begin{tabular}{llll}
\toprule
\textbf{Asset} & \textbf{Type} & \textbf{License} & \textbf{Source} \\
\midrule
PVP~\citep{pvp} & Dataset & CC-BY-4.0 & \href{https://huggingface.co/datasets/holi-lab/PVP}{HF} \\
Phi-3.5-vision-instruct~\citep{abdin2024phi3technicalreporthighly} & Model & MIT License & \href{https://huggingface.co/microsoft/Phi-3.5-vision-instruct}{HF} \\
microsoft/Phi-4-reasoning-vision-15B~\citep{phi4vr14b2026} & Model & MIT License & \href{https://huggingface.co/microsoft/Phi-4-reasoning-vision-15B}{HF} \\
Qwen2.5-VL (7B, 72B)~\citep{qwen2.5-VL} & Model & Apache 2.0 & \href{https://huggingface.co/Qwen}{HF} \\
HF Transformers & Library & Apache 2.0 & \href{https://github.com/huggingface/transformers}{GitHub} \\
\bottomrule
\end{tabular}
\end{table}



\end{document}